\newtheorem{theorem}{\sc Theorem}[section]
\newtheorem{proposition}[theorem]{\sc Proposition}
\title{A New Family of Near-metrics for Universal Similarity}
\author{
{\sc Chu Wang}
\and
  {\sc Iraj Saniee}
\and
{\sc William S. Kennedy}
\and
{\sc Chris A. White}
\thanks{Nokia Bell Labs, 600 Mountain Avenue, Murray Hill, NJ 07974 ({\tt chu.wang, iraj.saniee, william.kennedy, chris.white}@{\tt nokia-bell-labs.com}).
}
}
\begin{document}

\maketitle

\begin{abstract}
We propose a family of near-metrics based on local graph diffusion to 
capture similarity for a wide class of data sets.  These quasi-metametrics, 
as their
name suggests, dispense with one or two standard axioms of metric spaces, 
specifically distinguishability and symmetry, so that similarity between 
data points of arbitrary type and form could be measured broadly and 
effectively. The proposed near-metric family includes the forward $k$-step 
diffusion and its reverse, typically on the graph consisting 
of data objects and their features.
By construction, this family of near-metrics is particularly appropriate for 
categorical data, 
continuous data, and vector representations of images and text extracted 
via deep learning approaches.
We conduct extensive experiments to evaluate the performance of this
family of similarity measures and compare 
and contrast with traditional measures of similarity used for each specific application and with the ground truth when available.
We show that for structured data including categorical and continuous 
data, the near-metrics corresponding to normalized forward $k$-step  
diffusion ($k$ small) work as one of the best performing similarity measures; 
for vector representations of text and images including
those extracted from deep learning, the near-metrics derived from
normalized and reverse $k$-step graph diffusion ($k$ very small) exhibit 
outstanding ability to distinguish data points from different classes.
\end{abstract}

\section{Introduction}
A core requirement and a fundamental module in various machine learning tasks is to measure the similarity between data points.
Clustering, community detection, search and query, and recommendation algorithms, to 
name only a few, involve similarity derivations that are essentially constructed 
on top of underlying distance measures.
To this end, numerous similarities have been designed specifically for 
particular types of data or data sets in order 
to optimize the performance~\cite{boriah2008similarity,goshtasby2012similarity,ting2016overcoming}.

For categorical data sets, the most straightforward similarity measure is the 
overlap measure, which counts the number of matching attributes between two data 
points.
More delicate measures take into consideration the number of 
categories, feature frequencies, and other local or global information.
Frequently used measures for categorical data include but are not limited to Eskin, 
Lin, Goodall, and Occurrence Frequency.
We refer interested readers to \cite{boriah2008similarity} for a comprehensive review.

For continuous data sets, completely different similarity measures have been used.
Cosine similarity and the inner product similarity are textbook approaches~\cite{jones1987pictures}.
The Minkowski distance is widely used which includes both the Manhattan distance (order 1 version) and Euclidean distance 
(order 2 version).
If data points are generated from the same distribution, Mahalanobis 
distance, or more broadly Bregman divergence, is frequently 
adopted~\cite{de2000mahalanobis,bregman1967relaxation}.  For distance between data
sets or distributions, Bhattacharyya and Hellinger distances and Kullback-Leibler divergence
are often leveraged~\cite{bhattacharyya1946measure, le2012asymptotics,kullback1951information}.
Hamming distance continues to be used for bit-by-bit string comparison \cite{hamming1950error}.
At the risk of information loss, categorical similarities also apply if the data is discretized first.
The MDL method is the most well-known algorithm for such a discretization task~\cite{fayyad1993multi}.

Besides structured data types mentioned above, unstructured data, such as 
text, audio, image, and video, are even more commonly encountered~\cite{gandomi2015beyond}.
It is not straightforward to regard unstructured data as in a simple vector form, 
because of its sequential or geometrical properties~\cite{rieck2007computation,milioris2014joint,rieck2011similarity,rieck2008linear,goshtasby2012similarity}.
One needs to carefully transform such unstructured data into a vector 
representation before applying similarity measures for reasonable success.
For text data, the traditional tf-idf method captures the frequency information while loses the order information~\cite{salton1986introduction}.
More recent deep learning approaches like the ``word2vec'' model maps any given word into a dense, 
low-dimensional (usually no more than a few hundred tuple) vector~\cite{mikolov2013distributed}.
Deep-learning based vector representation is already making its way in various applications~\cite{le2014distributed,collobert2011natural,kim2014convolutional,turian2010word},
yet to the best of our knowledge, there is currently no related work on similarity measures designed for 
such vectors.

In this paper, we propose a family of near-metrics based on 
local graph diffusion to quantify a distance or likeness between data points.
Instead of focusing on a specific data set or data type, 
our motivation is to develop a family of distance-like functions, or near-metrics, 
that are sufficiently general to capture all the above data types in a 
single representation.  For $n$ data points, each of them with $m$ features, 
we regard the data set as a bipartite graph consisting of $n$ \textit{object} 
nodes and $m$ \textit{feature} nodes connected by $mn$ weighted edges.  
The edge weights measure 
the presence, extent or strength, of each feature in each data object.
The similarity of one data point to another can then be defined by the 
transition probability of the random walk on this bipartite graph from
one point to the other for a given number of steps.
Such a graph diffusion-based similarity has several natural variants 
according to the locality parameter $k$ which is the number of steps of the 
random walk (its order), the diffusion direction (the forward or reversed 
variant), 
and the normalization of the feature weights (the normalized variant) where 
the edge weights are rescaled at each incident node to add to 1.

\subsection{Key Contributions}
The near-metrics proposed are based on the abstract concept of diffusion
on the object-feature graph. Therefore, they are not limited to specific data types
or data sets and we expect them to work in a wide variety of areas.
We follow a dual path: first we analyze the graph diffusion similarity 
itself theoretically in order to uncover its properties and features;
second, we evaluate the performance of this family of distances on 
various data sets and data types in comparison with frequently used similarities and when possible, to the ground truth.
We summarize our key contributions as follows:

\begin{itemize}[leftmargin=*]

\item 
We propose a family of near-metrics. 
The intuition behind the localized (i.e., finite-step) graph 
diffusion similarity is the mass transfer analogy of a random walk
from one object to another via their common features: the more common
the features two objects have the more mass is transferred from an
object to the other.

\item
The family of near-metrics is analyzed theoretically in several respects.
We show that, when applied to categorical datasets or distributions, the 
graph diffusion is a metametric.
For general data sets, conditions for the proposed near-metrics to be metametrics or quasi-metametrics are thoroughly discussed.
The analysis demonstrates that the proposed near-metrics are able to 
function as well defined similarities, while giving enough flexibilities to accommodate different types of data. 

\item
We evaluate this family of near-metrics on a wide variety of benchmark 
datasets of all types including categorical, continuous, text, images, 
and their mixtures. 
For structured data sets, the proposed near-metrics are 
among the best performing measures of similarity.
For compact representations of unstructured data like texts and images,
the proposed near-metrics capture the intrinsic relation 
between data points and their performance is outstanding.

\end{itemize}

\subsection{Organization of the Paper}
The rest of the paper is organized as follows.
In Section \ref{sec:formula}, we formally introduce the graph diffusion similarity 
and its variants, discuss their basic properties, 
and provide necessary and sufficient conditions for these to be 
quasi (loss of symmetry) or metametrics (loss of complete distinguishability).
In Section \ref{sec:structured}, we evaluate the performance of the graph diffusion 
similarity together with frequently used similarity measures in various bench mark 
structured data sets.
In Section \ref{sec:unstructured}, the evaluation is further conducted for vector 
representations of unstructured data including texts and images.
Section \ref{sec:conclusion} discusses our results and observations, proposes future 
research directions, and concludes the paper.

\section{Graph Diffusion Similarity}\label{sec:formula}

Consider $n$ objects, each endowed with a non-negative feature vector of dimension $m$.
All the information about objects and their features is captured by a 
bipartite graph $\mathcal{B}$ of $n$ object nodes and $m$ feature nodes together 
with the edges between them.
The weight of the edge linking object $i$ and feature $j$ is defined as the 
strength or \textit{value} of feature $j$ of object $i$.
Continuous data sets, binary data sets, and vector representations of 
unstructured data can be mapped to this bipartite graph form directly;
for categorical data, the transformation is also straightforward: 
a categorical feature with $l$ different categories is replaced by a $l$ 
bits one-hot binary feature vector.
Notice that this transform does not incur any information loss.
An example of $n=4$ and $m=5$ is shown in Figure \ref{fig:bipartite}(a), where the blue nodes are objects and orange nodes are features.

To illustrate the idea behind the graph diffusion similarity, we consider 
the random walk on $\mathcal{B}$ as follows.
We initially place a particle at any object node the similarity to which
we wish to quantify, and let a random walk on the graph take place.
For example in Figure \ref{fig:bipartite}, the particle is originally placed 
at the first node.
Since there are 3 edges pointing away with weights 3, 5, and 2,
the probability vector that the particle ends up at feature nodes 1 to 5 is 
$(0.3, 0.5, 0, 0.2, 0)$, respectively, see Figure \ref{fig:bipartite}(b).
We call this a 1-step diffusion or random walk 
on $\mathcal{B}$ starting at node 1. 
Now the particle conducts another step of the random walk, 
and the corresponding probability (to arrive at any given object node) 
can be computed by adding the probabilities from all feature node. 
see Figure \ref{fig:bipartite}(c) and \ref{fig:bipartite}(d).
We call these two steps of the random walk one \textit{round}.  
The induced subgraph after one round (2 steps) gives
rise to an object-object graph that we denote by $\mathcal{G}$.
In the language of Markov chain theory, the 2$m$-step random walk
on $\mathcal{B}$ is the first $m$ rounds of iterations in the 
computation of the stationary distribution (principal eigenvector) of the 
row-normalized adjacency matrix of $\mathcal{G}$ starting at 
the localization vector $u=(0,\dots,1,\dots,0)$ where a 1 is placed at node 
1 in the above example, see \cite{norris1998markov} for details of iterative or power method.
Random walk theory is frequently used for the task of community detection~\cite{rosvall2008maps,pons2005computing}, yet to our best knowledge,
there is no previous work of adopting random walk for the construction of similarity measures.

\begin{figure}[htp!]
    \centering
           \includegraphics[trim={0 1cm 1cm 0},width=0.95\textwidth]{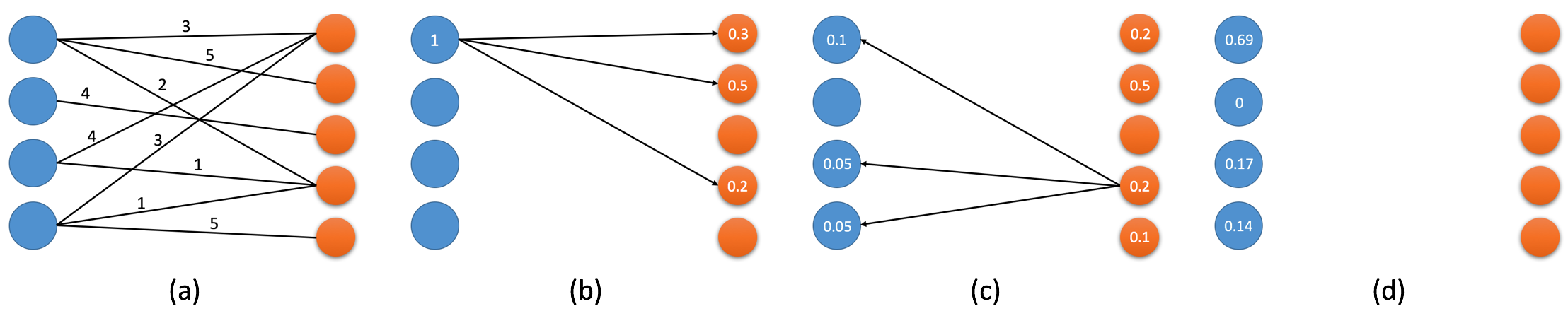}
    \caption{Example: the calculation of graph diffusion similarity. \label{fig:bipartite}}
\end{figure}

Now let the particle start at object $i$ in $\mathcal{G}$, then 
we define the order $k$ diffusion similarity of $i$ to $j$, 
denoted by $g^{(k)}(i,j)$, as the probability of the particle starting at $i$
ending up at $j$ after $k$ steps in $\mathcal{G}$, or $2k$ steps in $\mathcal{B}$.
Notice that similar objects, that is those with similar features and strengths, 
have stronger connection in the bipartite graph $\mathcal{B}$, and consequently
the $k$-round transition probability between them in $\mathcal{G}$
will be higher.
This family of similarity measures may thus be seen as a truncated and localized version of 
the principal eigenvector computation on $\mathcal{G}$.
Unlike this computation, we are interested in the finite step transition probability between 
a pair of nodes $(i,j)$ on $\mathcal{G}$ as a measure of similarity between $i$ and $j$,
which the principal eigenvector does not provide.

It is clear that $g^{(k)}(i,j)$ is not necessarily symmetric, therefore we 
define $r^{(k)}(i,j):=g^{(k)}(j,i)$, which we will refer to as the reversed 
diffusion similarity, and it quantifies the $k$-step similarity of $j$ to $i$. 
To balance the importance of each feature, one can normalize each feature vector's 
row-sum to 1 and then calculate the graph diffusion similarity.
We call the corresponding similarity, denoted by $n^{(k)}(i,j)$, 
the normalized graph diffusion similarity.
We will show later that the normalized graph diffusion similarity is 
symmetric: $n^{(k)}(i,j)=n^{(k)}(j,i)$.
All of the above are measures of similarity each with a
corresponding measure of distance: 
$g_d^{(k)}(i,j):=1-g^{(k)}(i,j)$, 
$r_d^{(k)}(i,j):=1-r^{(k)}(i,j)$, and $n_d^{(k)}(i,j):=1-n^{(k)}(i,j)$,
which are the graph diffusion distance, reversed graph diffusion distance, 
and normalized graph diffusion distance, respectively.

For simplifying the analysis, we introduce the matrix form of 
the above graph diffusion similarity and distance measures.
For the $n\times m$ feature matrix $W=(w_{ij})$ where $1\le i\le n$ and $1\le j \le m$,
define diagonal matrices $P=(p_{ij})$ and $Q=(q_{ij})$ as:
$p_{ll}=\sum_{s=1}^m w_{ls}$, $q_{ll}=\sum_{s=1}^nw_{sl}$.
In other words, $P$ and $Q$ are the row-sum and column-sum diagonal matrices 
corresponding to $W$.
We assume that $p_{ll}$ and $q_{ll}$ are non-zero, otherwise we can discard 
the null object or remove the absent feature.
When its dimension is understood, let $\bm{1}$ be the all-one column vector.
Define $n\times n$ matrix $S$ as:
\begin{equation}\label{eq:matrix}
S:=P^{-1}WQ^{-1}W^T.
\end{equation} 
Based on the definition of $P$ and $Q$, it is clear that $S$ is a row-stochastic matrix since
$S\bm{1}=P^{-1}WQ^{-1}W^T\bm{1}=P^{-1}W\bm{1}=\bm{1}$.
Recall that for the random walks on $\mathcal{B}$ or $\mathcal{G}$, 
the matrix $S$ is actually the single-step transition matrix on $\mathcal{G}$ 
or the two-step transition matrix on $\mathcal{B}$.
Let $G^{(k)}=(g^{(k)}(i,j))$ be the $n\times n$ matrix of the pairwise graph diffusion similarity, then it is clear that $G^{(1)}=S$.
The higher order diffusion similarity is straightforward to calculate:
\begin{equation}
G^{(k)}=S^k=\left(P^{-1}WQ^{-1}W^T\right)^k.
\end{equation}
In particular, for $g^{(1)}(i,j)$, an explicit formula can be written as:
\begin{equation}\label{eq:k=1}
g^{(1)}(i,j)=\sum_{s=1}^m\frac{w_{is}}{w_{i1}+\dots+w_{im}}\frac{w_{js}}{w_{1s}+\dots+w_{ns}}
=\frac{1}{p_{ii}}\sum_{s=1}^m\frac{w_{is}w_{js}}{q_{ss}}.
\end{equation}

A practical issue worth mentioning is the computational cost of the graph diffusion distance.
If the goal is to compute a single pair similarity, \eqref{eq:k=1} shows 
that the cost is $O(mn)$, which is less than ideal.
For example, the Euclidean distance or cosine similarity only requires $O(m)$ calculations. 
However, note that graph diffusion similarity for one pair of objects is not as important as
the similarity between a set of objects and a fixed object.
To wit, for similarity search and related tasks relative to an object $i$, one may need to 
calculate $g^{(1)}(i,j)$ for all $j$ followed by ranking.
From the matrix form \eqref{eq:matrix}, it is clear that the computational 
cost for this task is still $O(mn)$, 
which scales linearly in the number of objects and the number of features.
Again from \eqref{eq:matrix}, the computation of the graph diffusion similarity 
for all the pairs of objects requires $O(mn^2)$ calculations,
which is the same as other traditional similarities.
The reversed and normalized variants only involve matrix transpose and 
normalization operations, thus the cost is in the same order.
Since the calculation of the graph diffusion distance can be written 
in the matrix form \eqref{eq:matrix}, 
parallel computing is also straightforward to use if and when needed.

\section{Metametrics and Quasi-Metametrics}

In this section, we analyze the conditions under which the graph diffusion 
distance or its variants come close to a \textit{bona fide} metric.  
We assume that the bipartite graph $\mathcal{B}$ is connected,
otherwise the data set can be partitioned into groups of objects with completely different features.
As it turns out, 
we do not actually need the full
strength of a metric but only 1/2 of the key metric properties to judiciously separate points. 
In fact, some metric properties such as symmetry are even unnatural in our context:
among an object $A$'s neighbors $B$ is the most similar to $A$, 
but among the many more neighbors of $B$, $A$ may not be that similar to $B$. 
From the four key metric properties, we wish to keep non-negativity and the triangle 
inequality to preserve a notion of neighborhood.
It turns out the intersection of a quasimetric and a metametric would be 
sufficient for our purposes.  

A quasimetric is a metric without the symmetry property 
while a metametric does not require that two objects with same features to be
identical. 
In order for a function $d(\cdot,\cdot)$ to be a metametric,
$d$ has to be non-negative, symmetric, $d(x,y)=0$ to imply $x=y$ but not
necessarily vice versa, and to satisfy the triangle inequality $d(x,y)+d(y,z)\ge d(x,z)$.
Notice that in \eqref{eq:k=1}, when $p_{ii}$ is the same for all $i$, $g^{(1)}(i,j)$ becomes symmetric.
In fact, the symmetry will be inherited by $g^{(k)}(i,j)$, and we have
\begin{theorem}\label{th1}
The normalized graph diffusion distance of order $k$, namely $n_d^{(k)}(\cdot,\cdot)$, is a metametric.
When applied to distributions or categorical data, the forward, reversed, and normalized graph diffusion distances become identical, and are all metametrics as well.
\end{theorem}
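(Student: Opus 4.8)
The plan is to verify the four defining properties of a metametric for $n_d^{(k)}$, and then argue that on categorical data or distributions the three distance variants coincide so that the metametric conclusion transfers to all of them. Write $N^{(k)}$ for the matrix of normalized diffusion similarities; by definition this is $(\tilde P^{-1}\tilde W \tilde Q^{-1}\tilde W^T)^k$ where $\tilde W$ is $W$ with each row rescaled to sum to $1$, so that $\tilde P = I$. The first observation is that $N^{(k)}=(\tilde W \tilde Q^{-1}\tilde W^T)^k$ is a product of a symmetric matrix with itself $k$ times, hence symmetric; this is exactly the $p_{ii}$-constant case flagged right before the theorem, since after row normalization all row sums equal $1$. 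Non-negativity of $n_d^{(k)}=1-n^{(k)}(i,j)$ follows because $N^{(k)}$ is row-stochastic (it is a power of the row-stochastic matrix $S$ built from $\tilde W$), so each entry lies in $[0,1]$ and $1-n^{(k)}(i,j)\ge 0$.

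The two substantive points are the triangle inequality and the implication $n_d^{(k)}(x,y)=0 \Rightarrow x=y$ (but not conversely, which is what makes it a metametric rather than a metric). For the triangle inequality I would use the standard fact that $1-$ (an entry of a symmetric stochastic matrix) need not itself be a metric, so instead I expect the authors to exploit symmetry together with the diagonal dominance of $N^{(k)}$: one shows $n^{(k)}(i,i)\ge n^{(k)}(i,j)$ for all $j$ (the return probability dominates), and more generally that $1-M$ satisfies the triangle inequality whenever $M$ is symmetric, stochastic, positive semidefinite with $M_{ii}\ge M_{ij}$. Since $\tilde W\tilde Q^{-1}\tilde W^T$ is of the form $A A^T$ with $A=\tilde W \tilde Q^{-1/2}$ it is PSD, and even powers of a symmetric PSD matrix are PSD; for odd $k$ one writes $N^{(k)} = N^{(1)} (N^{(1)})^{(k-1)/2}\cdots$, but more cleanly $N^{(k)} = B^{k}$ with $B$ symmetric, and $B^k$ is PSD iff... — here one must be slightly careful, so the cleanest route is: $B$ is similar to a symmetric PSD matrix? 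No — instead use that the eigenvalues of $B = \tilde W\tilde Q^{-1}\tilde W^T$ lie in $[0,1]$ (it is symmetric, PSD, and stochastic), hence $B^k$ has eigenvalues in $[0,1]$, is symmetric and PSD, and one invokes the lemma that for such matrices $d(i,j):=1-B^k_{ij}$ obeys the triangle inequality (this is essentially the statement that $\sqrt{2(1-B^k_{ij})}$ is a Euclidean distance via the Gram-matrix embedding, and scaling/squaring preserves the triangle inequality in the needed direction — this step needs the diagonal to be constant, which holds only if $B^k_{ii}$ is the same for all $i$; if not, one argues directly from $2 - B^k_{ij} - B^k_{jl} \ge 1 - B^k_{il}$ using $B^k_{ii}\le 1$ and a Cauchy–Schwarz bound $B^k_{ij} \le \sqrt{B^k_{ii}B^k_{jj}} \le \max_i B^k_{ii} \le 1$ together with $B^k_{ij}+B^k_{jl} \ge$ something). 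I anticipate this is where the real work lies, and I would organize it as a standalone lemma: \emph{if $M$ is a symmetric positive semidefinite matrix with $0\le M_{ij}\le 1$ and $M_{ii}=1$ for all $i$, then $1-M_{ij}$ is a metametric}; then verify its hypotheses for $M=N^{(k)}$ — noting that $N^{(k)}_{ii}=1$ forces checking whether the walk can fail to return, which connectedness of $\mathcal B$ plus aperiodicity considerations address, or else one passes to the weaker statement. For the distinguishability half, $n_d^{(k)}(x,y)=0$ means $n^{(k)}(x,y)=1$; since $N^{(k)}$ is stochastic this forces the entire $x$-row to be $e_y$, and by symmetry the $y$-row is $e_x$, so rows $x$ and $y$ of $B$ itself must be supported so that $B^k$ collapses — unwinding this through the bipartite structure forces objects $x$ and $y$ to have identical normalized feature vectors up to the support constraints, and in particular (using connectedness) $x=y$; the failure of the converse is exactly the point that two objects with the same features but we do not insist they be "identical" as data points, which is the metametric relaxation.

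Finally, for the categorical/distributional case: after the one-hot encoding described in Section~\ref{sec:formula}, or when each object is already a probability distribution, every row of $W$ already sums to $1$, so $P=I$ and row normalization is vacuous, whence $\tilde W = W$ and $N^{(k)} = G^{(k)}$. Symmetry of $G^{(k)}$ then follows from the same argument ($P$ constant), and since $R^{(k)}=(G^{(k)})^T = G^{(k)}$ the reversed variant coincides too; hence $g_d^{(k)}=r_d^{(k)}=n_d^{(k)}$ on this class, and the metametric conclusion for $n_d^{(k)}$ transfers verbatim. I would present this paragraph last as a short corollary of the computations already done, so the only genuine obstacle is the triangle-inequality lemma for powers of a symmetric stochastic PSD matrix; everything else is bookkeeping with the matrix identity~\eqref{eq:matrix}.
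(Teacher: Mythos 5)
Your outline of symmetry, non-negativity, distinguishability, and the reduction of the categorical/distributional case (rows already summing to one, so the three variants coincide) matches the paper's structure, but there is a genuine gap at precisely the point you flag as ``where the real work lies'': the triangle inequality is never actually established. The standalone lemma you propose assumes $M_{ii}=1$ for all $i$, which fails for $M=N^{(k)}$ — the return probability $n^{(k)}(i,i)$ is in general strictly less than $1$ (the paper explicitly remarks that $g_d^{(k)}(i,i)$ need not vanish), so the lemma's hypotheses cannot be verified; and your fallback ``direct argument'' trails off: the Cauchy--Schwarz bound $(B^k)_{ij}\le\sqrt{(B^k)_{ii}(B^k)_{jj}}\le 1$ only gives $(B^k)_{ij}+(B^k)_{jl}\le 2$, which is not enough, and the phrase ``$(B^k)_{ij}+(B^k)_{jl}\ge$ something'' points in the wrong direction — what is needed is the upper bound $(B^k)_{ij}+(B^k)_{jl}\le 1+(B^k)_{il}$. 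The observation that rescues your route, and which is missing, is this: by symmetry $(B^k)_{ij}=(B^k)_{ji}$, so for $i\neq l$ the two quantities are distinct entries of row $j$ of a non-negative row-stochastic matrix and hence sum to at most $1\le 1+(B^k)_{il}$; positive semidefiniteness is only needed in the degenerate case $i=l$, where the $2\times2$ principal minor gives $2(B^k)_{ij}\le 2\sqrt{(B^k)_{ii}(B^k)_{jj}}\le (B^k)_{ii}+(B^k)_{jj}\le 1+(B^k)_{ii}$. With that filled in, your spectral argument (symmetric, stochastic, PSD powers of $\widetilde{W}\widetilde{Q}^{-1}\widetilde{W}^T$) does go through and is a legitimately different proof; as written, however, the central claim is asserted rather than proved.

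For comparison, the paper avoids spectral considerations entirely. For order $1$ it bounds the similarity combination $n^{(1)}(1,2)+n^{(1)}(2,3)-n^{(1)}(1,3)$ by $2/3<1$ using only $q_{kk}\ge w_{1k}+w_{2k}+w_{3k}$ and the scalar inequality $xy/(x+y)\le x/9+4y/9$, and exhibits a feature matrix showing $2/3$ is tight. For general order it reduces to order $1$ structurally: for even $k$ the half-power $\widetilde{W}=(WQ^{-1}W^T)^{k/2}$ is doubly stochastic, so its column-sum matrix is the identity and Proposition 3 of the appendix applies to it directly; for odd $k$ one writes the similarity matrix as $\overline{W}Q^{-1}\overline{W}^T$ with $\overline{W}=\widetilde{W}W$ row-stochastic and having the same column sums $Q$. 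That reduction buys a uniform quantitative slack (the $2/3$ constant) and elementary self-containedness; your completed argument would instead isolate the abstract reason the inequality holds (symmetric stochastic PSD), but you must supply the missing step above before the theorem follows.
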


A quasi-metametric is a metametric without symmetry.
A quasi-metametric captures the key asymmetric relations in object-feature data sets
and provides a good neighborhood structure.
Therefore, a quasi-metametric has the only the necessary properties for a similarity to be useful.
In the case when $p_{ii}$ are different for different $i$, symmetry no longer exists, 
and thus quasi-metametric is our best shot:

\begin{theorem}\label{th}
Let $P$ be the row-sum diagonal matrix for $W$.
If $\min p_{ii} /\max p_{ii}>2/3$, then both the forward graph diffusion 
distance $g_d^{(1)}(\cdot,\cdot)$ and te reversed graph diffusion distance 
$r_d^{(1)}(\cdot,\cdot)$ are quasi-metametrics.
\end{theorem}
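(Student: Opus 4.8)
The plan is to check, for $g_d^{(1)}$, the three axioms of a quasi-metametric --- non-negativity, the implication $g_d^{(1)}(i,j)=0\Rightarrow i=j$, and the triangle inequality --- and then to note that the statement for $r_d^{(1)}$ follows at once. Indeed $r_d^{(1)}(i,j)=g_d^{(1)}(j,i)$, so non-negativity and the separation axiom for $r_d^{(1)}$ are immediate, and its triangle inequality $r_d^{(1)}(i,j)+r_d^{(1)}(j,l)\ge r_d^{(1)}(i,l)$ is exactly the triangle inequality for $g_d^{(1)}$ read on the relabeled triple $(l,j,i)$. So everything reduces to $g_d^{(1)}$, and I expect only the triangle inequality to use the hypothesis $\min p_{ii}/\max p_{ii}>2/3$.

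Non-negativity and separation are short. Since $S=G^{(1)}=P^{-1}WQ^{-1}W^T$ is row-stochastic, $g^{(1)}(i,j)\in[0,1]$, hence $g_d^{(1)}\ge 0$. If $g_d^{(1)}(i,j)=0$ with $i\ne j$, then $g^{(1)}(i,j)=1$, and since row $i$ of $S$ is a probability vector this forces $g^{(1)}(i,i)=0$; but \eqref{eq:k=1} gives $g^{(1)}(i,i)=p_{ii}^{-1}\sum_s w_{is}^2/q_{ss}>0$ because $p_{ii}=\sum_s w_{is}>0$, a contradiction. (The same computation gives $g^{(1)}(i,i)<1$, so $g_d^{(1)}(i,i)>0$ in general, which is precisely the metametric-but-not-metric phenomenon and why symmetry is all we drop.)

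For the triangle inequality I would establish $g^{(1)}(i,j)+g^{(1)}(j,l)\le 1+g^{(1)}(i,l)$ for all $i,j,l$. Write $h(a,b):=\sum_s w_{as}w_{bs}/q_{ss}$; this is symmetric, $g^{(1)}(a,b)=h(a,b)/p_{aa}$, and it obeys two elementary identities, $h(a,b)\le\min(p_{aa},p_{bb})$ (since $w_{bs}/q_{ss}\le 1$) and $\sum_b h(a,b)=p_{aa}$ (since $\sum_b w_{bs}=q_{ss}$). Multiplying the target inequality by $p_{ii}$ and substituting $p_{ii}=\sum_t h(i,t)$ recasts it as
\begin{equation*}
h(i,j)\;+\;\frac{p_{ii}}{p_{jj}}\,h(j,l)\;-\;h(i,l)\;\le\;p_{ii}.
\end{equation*}
The cases in which the middle term is $\le h(i,l)$ (in particular $h(j,l)=0$, and the case $i=j$) are trivial, as is $j=l$; in the remaining range one uses the ``defect'' identity $\tfrac{p_{ii}}{p_{jj}}h(j,l)-h(i,l)=\sum_s\tfrac{w_{ls}}{q_{ss}}\bigl(\tfrac{p_{ii}}{p_{jj}}w_{js}-w_{is}\bigr)$, whose coefficients $\tfrac{p_{ii}}{p_{jj}}w_{js}-w_{is}$ sum to zero over $s$. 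Bounding the factors $w_{ls}/q_{ss}$ via $q_{ss}\ge w_{is}+w_{js}+w_{ls}$ and $w_{ls}\le p_{ll}$, and invoking the spread bound $2/3<p_{ii}/p_{jj},\,p_{ll}/p_{jj}<3/2$, one controls the positive part of the defect by the available slack $p_{ii}-h(i,j)$, which closes the inequality.

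The step I expect to be the main obstacle is exactly this last estimate, i.e.\ extracting the constant. The crude bound $w_{ls}/q_{ss}\le 1$ by itself is not enough: it is nearly attained only when $p_{ll}$ is large, and in that regime the triangle inequality genuinely fails (take $w_{i\cdot}$ and $w_{j\cdot}$ roughly orthogonal and $w_{l\cdot}$ of very large mass), so the proof must quantify the trade-off ``$w_{ls}/q_{ss}$ near $1$ forces $p_{ll}$ large,'' which is precisely what $\min p_{ii}/\max p_{ii}>2/3$ prohibits. Equivalently, the pair $(i,j)$ cannot be fully decoupled from the third point $l$: the slack $p_{ii}-h(i,j)$ must be weighed against $h(j,l)\le\min(p_{jj},p_{ll})$ with the spread of the $p_{ii}$'s kept in play throughout, and $2/3$ is the threshold at which the two sides balance.
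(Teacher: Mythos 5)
There is a genuine gap: the only part of the theorem that actually uses the hypothesis $\min p_{ii}/\max p_{ii}>2/3$ --- the triangle inequality --- is not proved. Your reductions are fine (passing from $r_d^{(1)}$ to $g_d^{(1)}$ by relabeling, non-negativity from row-stochasticity of $S$, and the separation argument via $g^{(1)}(i,i)>0$, which matches the paper's Proposition on basic properties), and your reformulation $h(i,j)+\tfrac{p_{ii}}{p_{jj}}h(j,l)-h(i,l)\le p_{ii}$ together with the identities for $h$ is correct. But the decisive step is then only asserted: ``one controls the positive part of the defect by the available slack $p_{ii}-h(i,j)$, which closes the inequality.'' That is precisely the estimate the theorem requires, no argument is given for it, and you yourself flag it as the expected obstacle; as written, the case analysis and the extraction of the constant $2/3$ are missing, and it is not evident that the defect-versus-slack comparison can be closed in the form you state.

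For comparison, the paper closes this step with a short direct estimate that avoids any case analysis. Writing the target as
\begin{equation*}
\sum_{k=1}^m\frac{\bigl(w_{1k}/p_{11}+w_{3k}/p_{22}\bigr)w_{2k}-w_{1k}w_{3k}/p_{11}}{q_{kk}}\le 1,
\end{equation*}
it drops the non-negative term $w_{1k}w_{3k}/p_{11}$, bounds $q_{kk}\ge w_{1k}+w_{2k}+w_{3k}$, pulls out $1/\min p_{ii}$, and applies the elementary inequality $\tfrac{xy}{x+y}\le\tfrac19 x+\tfrac49 y$ (equivalent to $(x-2y)^2\ge0$) with $x=w_{1k}+w_{3k}$, $y=w_{2k}$; summing over $k$ gives the bound $\tfrac23\,\max p_{ii}/\min p_{ii}$, which is at most $1$ under the hypothesis (the constant $2/3$ is exactly the tight constant from the normalized case in Proposition~\ref{prop1}). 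If you want to salvage your route, you would need to turn your defect-versus-slack heuristic into a proven bound of comparable strength; the weighted-AM trick above is the missing ingredient that makes the constant come out.
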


Theorem \ref{th} shows that, if the row-sums of the features are comparable, then the order 1
forward graph diffusion distance and its reversed version are quasi-metametrics.
However, since the similarity vector $g^{(k)}(i,\cdot)$ eventually converges to the 
equilibrium vector of the graph $\mathcal{G}$,
the triangle inequality can not hold for large $k$ if the equilibrium vector itself 
does not follow the triangle inequality.
On the other hand, the similarity vector $r^{(k)}(i,\cdot)$ converges to a vector 
of a constant, thus  triangle inequality holds.
Detailed proofs for Theorem \ref{th1} and Theorem \ref{th} are left in the supplementary material.

\section{Experiments on Structured Data}\label{sec:structured}
In this section, we evaluate the performance of graph diffusion similarities on structured data sets.
Since similarity is used explicitly or implicitly in a wide spectrum of data science and machine learning studies, 
it is difficult to analyze and evaluate the newly proposed near-metrics thoroughly in all kinds of tasks and application.
Instead, we aim at the first-principle experiments, that we check whether two data points that are ``close'' 
according to a similarity share the same ground truth label.

More specifically, let $x$ be any chosen data point and $y$ the corresponding label.
To test the performance of a similarity measure $S$, we first rank all
the data points with respect to their similarities to $x$.
Then for any $0 < f \le 1 $, we calculate the proportion of data points that hold different 
labels compared to $y$ in the $(nf)$-nearest neighbors of $x$,
which yields the error value $e^S(x,f)$ of data point $x$ at $f$.
The error curve is defined as the averaged error for all the data points:
\begin{equation}
E^S(f):=\frac{1}{n}\sum_{x}e^S(x,f).
\end{equation}
Notice that $E^S(1)$ does not depend on the similarity measure adopted.
It is determined by the number of data points in each class.
For example, if the data set contains two classes of equal number of data points, then $E^S(1)=0.5$.
We define $E^S(0)=0$ for convenience.
Naturally, the error curve $E^S(f)$ is expected to grow (but not necessarily) when $f$ becomes larger.
In the following experiments, we did encounter cases when this curve fluctuates, 
but most of the time we found it increases monotonically.

In Table \ref{table}, we demonstrate the performance of 10 existing similarity measures 
and the graph diffusion similarity measures of order 1 to 7, denoted by GD1 to GD7.
The existing similarity measures include overlap, Eskin, IOF, OF, Lin, Goodall3, 
Goodall4, inner product, Euclidean, and cosine.
We use reversed graph diffusion during the experiments.
Recall that, when applied to categorical features, all graph diffusion similarities coincide.
Among the tested data sets, the results of 11 are shown in Table \ref{table}.
In table \ref{table}, 9 of the shown data sets are from the UCI Machine Learning Repository~\cite{asuncion2007uci}.
The LC and PR are loan level data sets from the two largest P2P sites, 
Prosper and Lending Club~\cite{club2015lending,prosper2016}.
Because of the page limit, we are not able to demonstrate the error 
curves directly in this section;
instead, we show the value of $E^S(0.01)$, $E^S(0.02)$, $E^S(0.05)$ in Table \ref{table},
which correspond to the averaged errors at $1\%$, $2\%$, and $5\%$ nearest neighbor sets.

\begin{table*}
\begin{center}
\tiny
\tabcolsep=0.15cm
\begin{tabular}{ |c|c|c|c|c|c|c|c|c|c|c|c|c|c|c|c|c|c| } 
\hline
 & overlap & Eskin & IOF & OF & Lin & G3 & G4 & inner & l2 & cosine & GD1 & GD2 & GD3 & GD4 & GD5 & GD6 & GD7 \\
\hline
Balance Scale & 0.63 & 0.63 & 0.43 & 0.63 & 0.63 & 0.63 & 0.63 & 0.63 & 0.63 & 0.63 & 0.63 & 0.39 & 0.38 & 0.37 & 0.34 & 0.32 & 0.38 \\
$m=4$      & 0.44 & 0.44 & 0.41 & 0.44 & 0.44 & 0.44 & 0.44 & 0.44 & 0.44 & 0.44 & 0.44 & 0.34 & 0.35 & 0.36 & 0.36 & 0.36 & 0.36 \\
$n=625$   & 0.63 & 0.63 & 0.41 & 0.63 & 0.63 & 0.63 & 0.63 & 0.63 & 0.63 & 0.63 & 0.63 & 0.46 & 0.44 & 0.45 & 0.44 & 0.42 & 0.44 \\
\hline
SPECT Heart & 0.28 & 0.27 & 0.23 & 0.23 & 0.23 & 0.22 & 0.28 & 0.28 & 0.28 & 0.28 & 0.21 & 0.29 & 0.34 & 0.37 & 0.39 & 0.44 & 0.43 \\
$m=22$ & 0.35 & 0.33 & 0.30 & 0.30 & 0.28 & 0.27 & 0.27 & 0.36 & 0.35 & 0.35 & 0.24 & 0.31 & 0.35 & 0.38 & 0.40 & 0.45 & 0.47 \\
$n=267$ & 0.35 & 0.33 & 0.28 & 0.28 & 0.27 & 0.25 & 0.38 & 0.35 & 0.35 & 0.35 & 0.23 & 0.28 & 0.36 & 0.40 & 0.42 & 0.43 & 0.44 \\
\hline
Lending Club  & 0.18 & 0.12 & 0.10 & 0.12 & 0.11 & 0.12 & 0.13 & 0.18 & 0.18 & 0.18 & 0.09 & 0.10 & 0.10 & 0.10 & 0.12 & 0.13 & 0.14 \\
$m=13$ & 0.24 & 0.15 & 0.15 & 0.15 & 0.15 & 0.16 & 0.15 & 0.24 & 0.24 & 0.24 & 0.14 & 0.14 & 0.15 & 0.16 & 0.16 & 0.17 & 0.16 \\
$n=39786$ & 0.24 & 0.18 & 0.17 & 0.18 & 0.18 & 0.18 & 0.18 & 0.24 & 0.24 & 0.24 & 0.17 & 0.17 & 0.17 & 0.18 & 0.19 & 0.19 & 0.19 \\
\hline
Nursery & 0.45 & 0.32 & 0.30 & 0.31 & 0.30 & 0.31 & 0.31 & 0.45 & 0.45 & 0.45 & 0.31 & 0.32 & 0.33 & 0.34 & 0.35 & 0.36 & 0.38 \\
$m=8$ & 0.52 & 0.43 & 0.35 & 0.35 & 0.36 & 0.35 & 0.38 & 0.52 & 0.52 & 0.52 & 0.37 & 0.37 & 0.38 & 0.39 & 0.41 & 0.42 & 0.43 \\
$n=12960$ & 0.60 & 0.46 & 0.43 & 0.44 & 0.43 & 0.44 & 0.45 & 0.60 & 0.60 & 0.60 & 0.44 & 0.44 & 0.45 & 0.46 & 0.47 & 0.48 & 0.49 \\
\hline
Prosper  & 0.30 & 0.21 & 0.21 & 0.20 & 0.22 & 0.23 & 0.26 & 0.30 & 0.30 & 0.30 & 0.21 & 0.22 & 0.25 & 0.29 & 0.32 & 0.35 & 0.40 \\
$m=18$ & 0.46 & 0.32 & 0.31 & 0.32 & 0.33 & 0.33 & 0.37 & 0.48 & 0.48 & 0.48 & 0.36 & 0.33 & 0.34 & 0.34 & 0.36 & 0.36 & 0.38 \\
$n=58845$ & 0.52 & 0.42 & 0.40 & 0.39 & 0.40 & 0.40 & 0.44 & 0.52 & 0.52 & 0.52 & 0.40 & 0.40 & 0.39 & 0.40 & 0.39 & 0.40 & 0.40 \\
\hline
Tic-Tac-Toe & 0.44 & 0.44 & 0.13 & 0.31 & 0.18 & 0.21 & 0.12 & 0.44 & 0.44 & 0.44 & 0.21 & 0.20 & 0.17 & 0.15 & 0.15 & 0.15 & 0.16 \\
$m=9$& 0.23 & 0.23 & 0.19 & 0.23 & 0.20 & 0.22 & 0.21 & 0.23 & 0.23 & 0.23 & 0.25 & 0.22 & 0.20 & 0.19 & 0.19 & 0.20 & 0.21 \\
$n=958$ & 0.45 & 0.45 & 0.25 & 0.39 & 0.34 & 0.36 & 0.26 & 0.45 & 0.45 & 0.45 & 0.36 & 0.34 & 0.32 & 0.30 & 0.29 & 0.29 & 0.29 \\
\hline
Car Evaluation  & 0.29 & 0.23 & 0.21 & 0.23 & 0.22 & 0.24 & 0.22 & 0.29 & 0.29 & 0.29 & 0.24 & 0.23 & 0.23 & 0.23 & 0.23 & 0.24 & 0.24 \\
$m=6$ & 0.52 & 0.29 & 0.23 & 0.30 & 0.27 & 0.30 & 0.24 & 0.52 & 0.52 & 0.52 & 0.30 & 0.31 & 0.31 & 0.31 & 0.31 & 0.30 & 0.30 \\
$n=1728$ & 0.40 & 0.32 & 0.31 & 0.33 & 0.31 & 0.34 & 0.31 & 0.40 & 0.40 & 0.40 & 0.34 & 0.34 & 0.34 & 0.33 & 0.33 & 0.34 & 0.34 \\
\hline
Chess  & 0.30 & 0.27 & 0.20 & 0.20 & 0.19 & 0.19 & 0.24 & 0.30 & 0.30 & 0.30 & 0.20 & 0.26 & 0.34 & 0.39 & 0.41 & 0.43 & 0.45 \\
$m=36$ & 0.36 & 0.33 & 0.26 & 0.26 & 0.25 & 0.26 & 0.30 & 0.36 & 0.36 & 0.36 & 0.26 & 0.30 & 0.36 & 0.40 & 0.42 & 0.44 & 0.45 \\
$n=3196$ & 0.42 & 0.38 & 0.33 & 0.33 & 0.32 & 0.32 & 0.36 & 0.42 & 0.42 & 0.42 & 0.32 & 0.35 & 0.40 & 0.43 & 0.44 & 0.46 & 0.47 \\
\hline
Hayes-Roth & 0.23 & 0.23 & 0.23 & 0.23 & 0.23 & 0.23 & 0.23 & 0.23 & 0.23 & 0.23 & 0.22 & 0.20 & 0.65 & 0.71 & 0.71 & 0.82 & 0.81 \\
$m=5$ & 0.32 & 0.32 & 0.32 & 0.25 & 0.25 & 0.24 & 0.30 & 0.32 & 0.32 & 0.32 & 0.22 & 0.21 & 0.60 & 0.61 & 0.63 & 0.68 & 0.74 \\
$n=160$ & 0.56 & 0.56 & 0.52 & 0.47 & 0.49 & 0.46 & 0.36 & 0.56 & 0.56 & 0.56 & 0.38 & 0.31 & 0.34 & 0.42 & 0.50 & 0.53 & 0.58 \\
\hline
Connect-4 & 0.50 & 0.49 & 0.41 & 0.40 & 0.42 & 0.40 & 0.45 & 0.50 & 0.50 & 0.50 & 0.42 & 0.45 & 0.49 & 0.51 & 0.51 & 0.51 & 0.51 \\
$m=42$ & 0.54 & 0.51 & 0.45 & 0.43 & 0.46 & 0.44 & 0.47 & 0.54 & 0.54 & 0.54 & 0.46 & 0.47 & 0.50 & 0.50 & 0.51 & 0.51 & 0.51 \\
$n=67557$ & 0.55 & 0.52 & 0.47 & 0.46 & 0.48 & 0.47 & 0.49 & 0.55 & 0.55 & 0.55 & 0.48 & 0.49 & 0.50 & 0.51 & 0.51 & 0.51 & 0.51 \\
\hline
Solar Flare  & 0.42 & 0.39 & 0.36 & 0.35 & 0.35 & 0.35 & 0.38 & 0.42 & 0.42 & 0.42 & 0.35 & 0.38 & 0.41 & 0.38 & 0.35 & 0.33 & 0.34 \\
$m=10$ & 0.38 & 0.34 & 0.31 & 0.30 & 0.30 & 0.31 & 0.33 & 0.38 & 0.38 & 0.38 & 0.31 & 0.32 & 0.35 & 0.37 & 0.38 & 0.38 & 0.38 \\
$n=1389$ & 0.36 & 0.33 & 0.28 & 0.29 & 0.28 & 0.29 & 0.31 & 0.36 & 0.36 & 0.36 & 0.29 & 0.29 & 0.31 & 0.32 & 0.33 & 0.33 & 0.33 \\
\hline
\end{tabular}
\end{center}
\caption{Evaluation of similarity measures on different data sets.
For each similarity $S$, the three rows describe the value 
of $E^S(0.01)$, $E^S(0.02)$, and $E^S(0.05)$.
For each data set, its object number $n$ and attribute number $m$ are listed below  
its name.
\label{table}}
\end{table*}

We make several key observations as follows.
It is shown in Table \ref{table} that no single similarity measure dominates all others, 
which is in accordance with the observation in \cite{boriah2008similarity} and the {\it no free lunch theorem} 
in optimization and machine learning~\cite{wolpert2002supervised}.
The order 1 forward graph diffusion similarity $g^{(1)}(\cdot,\cdot)$ is 
among the best, while IOF, Lin, and Goodall3 also perform well on certain data sets.
In addition, it can be observed that $g^{(1)}(\cdot,\cdot)$ usually 
performs the best compared to its higher order versions.
We will come back to this observation in the next section.

\section{Experiments on Unstructured Data}\label{sec:unstructured}

In this section, we present experimental evaluations of similarity 
measures on vector representations of unstructured data.
Unstructured data like text and images, despite their vast volume and importance, 
are less understood and more difficult to analyze.
The major reason appears to be that unstructured data can not be used directly
for the tasks of search, clustering, or recommendation.
To this end, various ways of transforming unstructured data into the vector form 
have been proposed~\cite{mikolov2013distributed,le2014distributed}.
For example, tf-idf converts a sentence or a document into a sparse, high-dimensional vector~\cite{salton1975vector},
and word2vec model translates texts into dense vector with semantics in low dimensions~\cite{mikolov2013distributed}.
Similar approaches for data sets consisting of or including images are still awaiting detailed documentation.

In this section, we focus on the experimental evaluation of similarity measures based
on vector representations derived from tf-idf and deep learning.
Our evaluation criterion remains the same as in Section \ref{sec:structured}: 
to use the error curve $E^{S}(f)$ for comparitson.
The experiments are conducted on the IMDB movie review dataset~\cite{maas-EtAl:2011:ACL-HLT2011}, 
the customer verbatim dataset, 
and the ImageNet dataset~\cite{deng2009imagenet}.

\subsection{IMDB Movie Review Dataset} 
The IMDB dataset consists of 50,000 movie reviews in the text 
form~\cite{maas-EtAl:2011:ACL-HLT2011}.
The length of the review varies from very short to more than 2,000 words.
Each movie review is associated with a binary sentiment polarity label.
There are 25,000 positive reviews and 25,000 negative reviews.
Naturally, a good similarity measure should yield a higher similarity score 
for reviews holding the same label. We test the similarity measures on two kinds of representations:
the tf-idf representation and a compact embedding from a convolutional neural network.

\textbf{tf-idf representation.} 
A direct way of translating a review into a feature vector is via the tf-idf 
representation~\cite{salton1975vector}.
In general, the importance of a word in a review is addressed by its frequency 
in that review multiplied by its inverse document frequency in the entire corpus.
Even though this transformation risks over-simplification, it captures some basic 
frequency-related information of the texts and is widely adopted~\cite{ramos2003using,berger2000bridging}.

In the left of Figure \ref{fig:imdb_tfidf}, we demonstrate the error curves of the 
forward graph diffusion similarity, its reversed and normalized variants,
and several traditionally used similarity measures.
It can be observed that the traditional measures including Euclidean, 
Manhattan, inner product, and cosine, are considerably outperformed by 
the reversed and the normalized graph diffusion similarities.

\begin{figure}[htp!]
    \centering
    \begin{tabular}{cc}
       \includegraphics[width=0.45\textwidth]{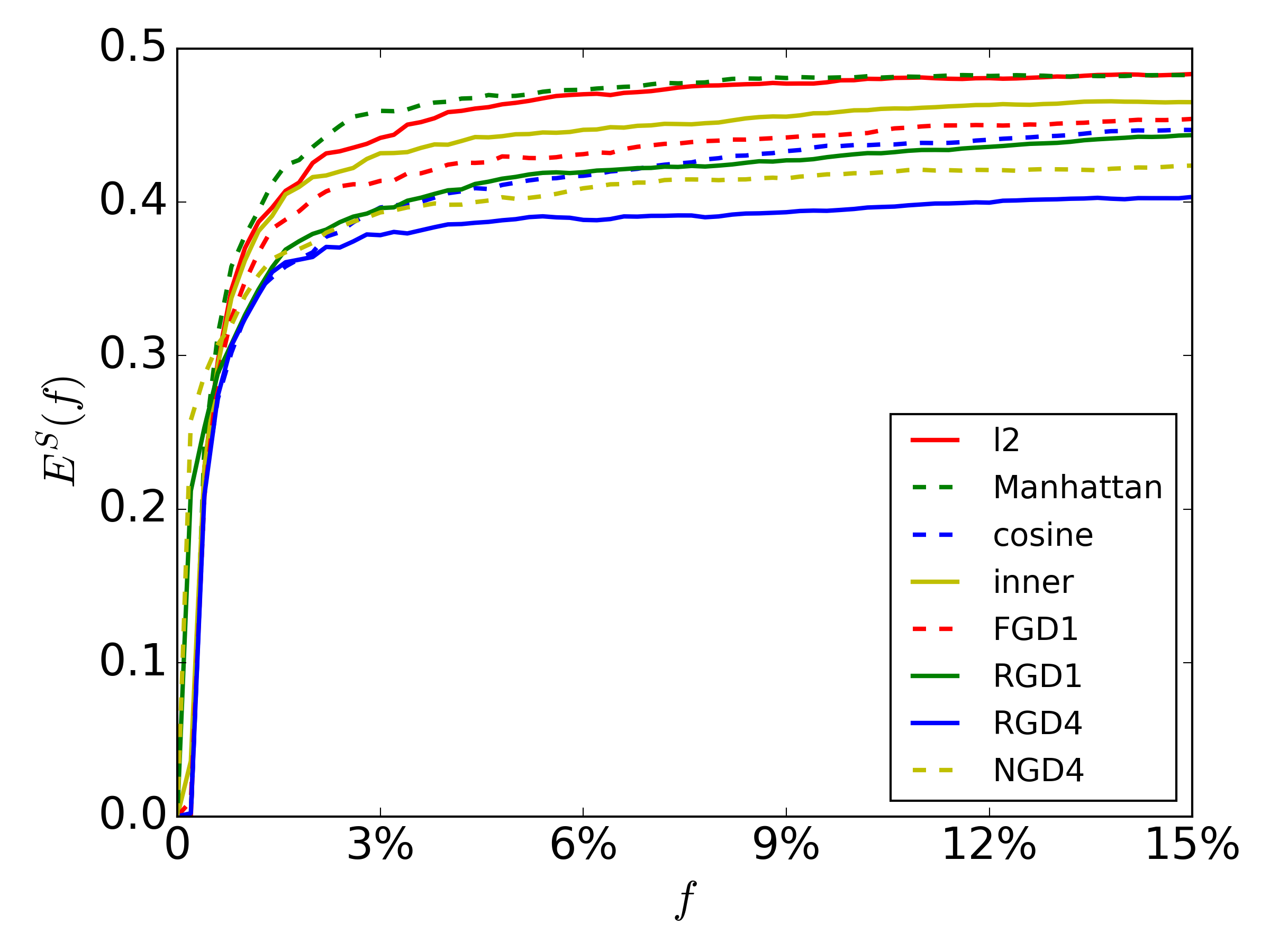}\hfill
 \includegraphics[width=0.45\textwidth]{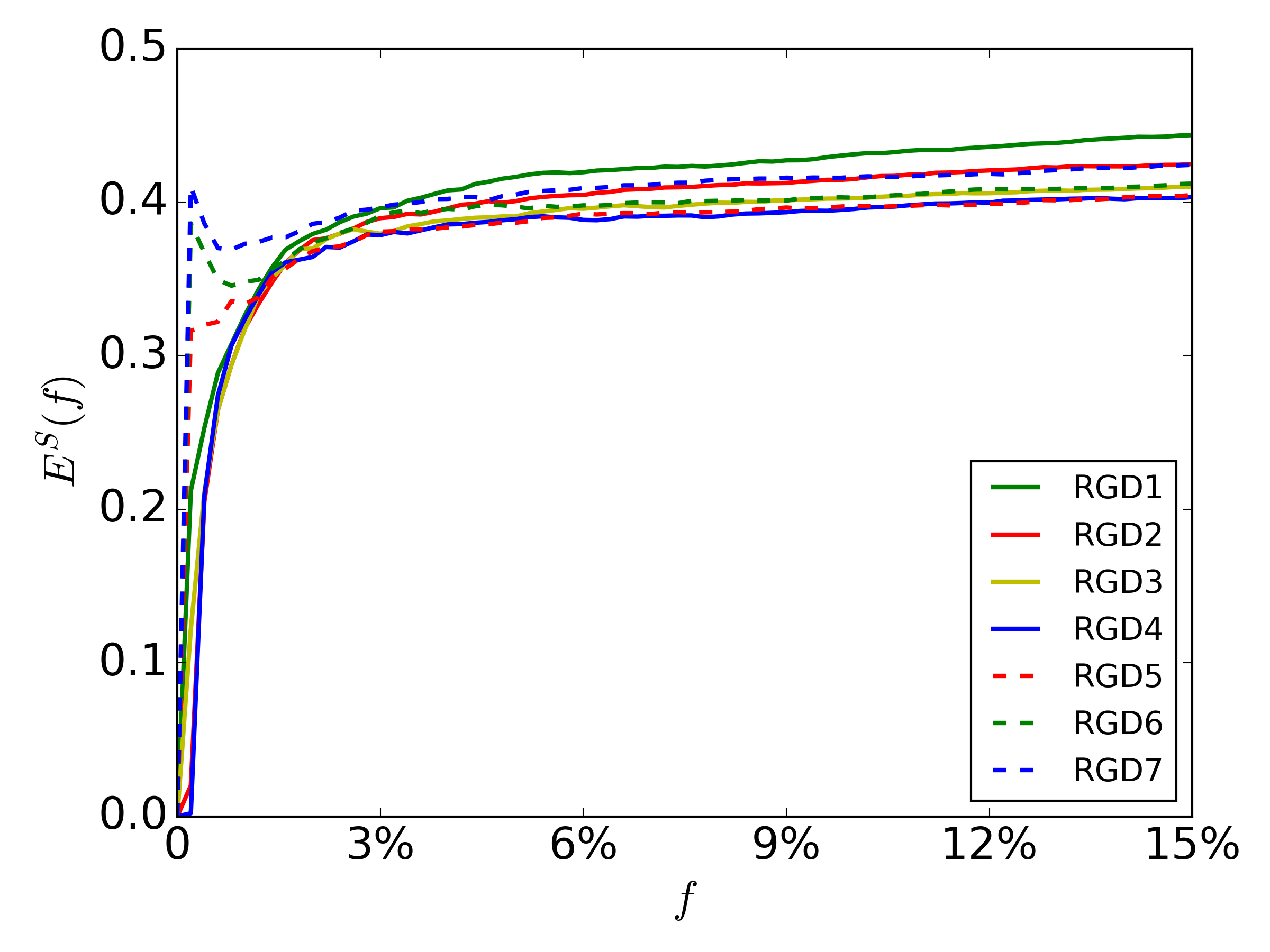} \\
    \end{tabular}
    \caption{Error curves for  the tf-idf representation of IMDB movie review dataset. 
       \label{fig:imdb_tfidf}} 
\end{figure}

The family of graph diffusion similarities also perform differently.
We demonstrate in the right of Figure \ref{fig:imdb_tfidf} the reversed 
graph diffusion similarity measures of order from 1 to 7 as an example.
It can be observed that the performances improves and later decreases as 
the order increases, and the order 4 and order 5 curves are the best among the 7.

\textbf{Compact embedding via deep learning.}
In addition to the tf-idf transformation, a more recent way of embedding a 
review into vector space is via deep learning.
The deep convolutional neural network we use here is designed for sentiment 
analysis by Kim~\cite{kim2014convolutional}.
The input layer converts any incoming paragraph into a vector of undetermined 
length,
then different sizes of convolutional windows further transform the vector 
into vectors of values.
After that, a max pooling layer eliminate the varying length and thus the 
number of nodes is the same as the number of convolution windows.
After the training session, the part of the neural network from the input 
layer to the last 
hidden layer itself becomes a function that maps any paragraph of texts 
into a fixed length of vector.

In the following experiments, the vectors are from a CNN with 128 convolution 
kernels and thus the feature vector consists of 128 dimensions.
Similar results are observed for different sizes of CNN as long as the number 
is not too small to lose track of the original information in the sentences. 
In Figure \ref{fig:IMDB}, we show the optimal error curve for reference.
The optimal error curve corresponds to the optimal similarity measure under which each review's neighbors are always with the same opinion label and the reviews holding different 
labels are far away from each other. 
Therefore the first half of the optimal error curve is 0 and then it gradually increases to 0.5.
It can be observed at the left of Figure \ref{fig:IMDB} that the reversed and normalized graph diffusion similarity measures outperform others in a clear way.
Furthermore, it is shown in the right of Figure \ref{fig:IMDB} that the order 2 normalized graph diffusion similarity almost coincide with the optimal curve.
\begin{figure}[htp!]
    \centering
    \begin{tabular}{cc}
       \includegraphics[width=0.45\textwidth]{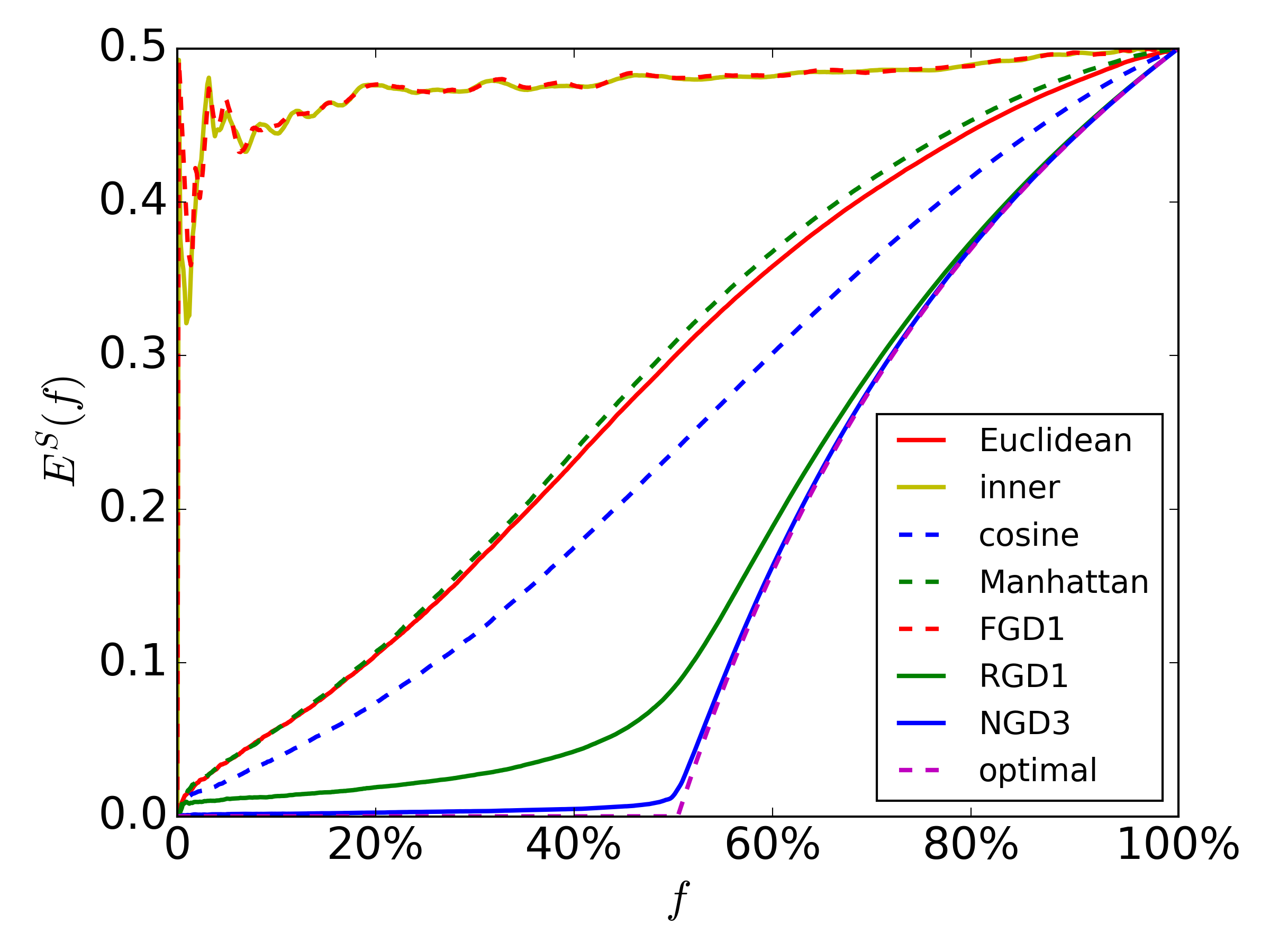}\hfill
 \includegraphics[width=0.45\textwidth]{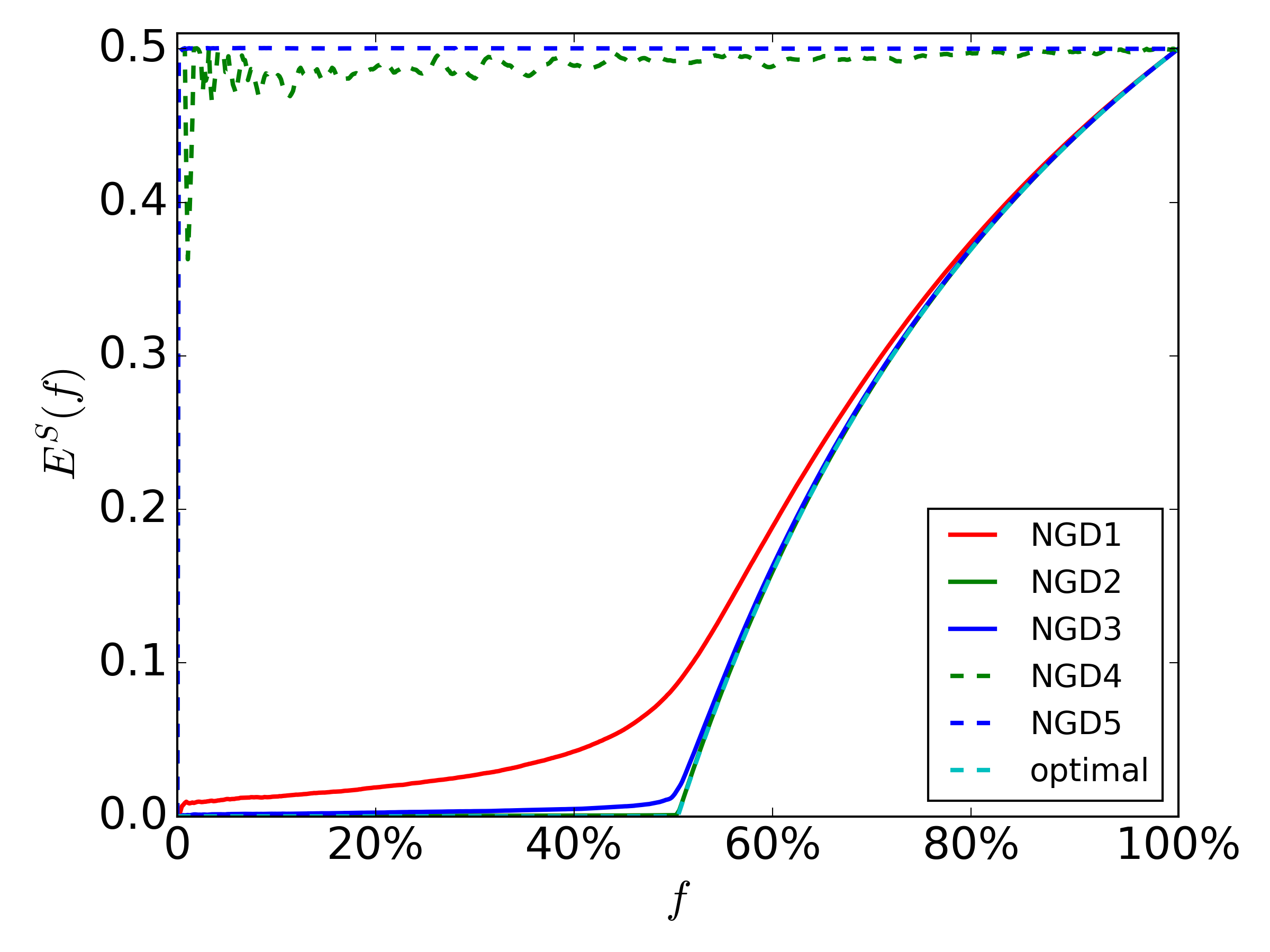} \\
    \end{tabular}
    \caption{Error curves for the compact representation of IMDB movie review dataset. 
    \label{fig:IMDB}} 
\end{figure}

An interesting phenomenon regarding graph diffusion similarities of different orders 
emerges from Figure \ref{fig:imdb_tfidf} and Figure \ref{fig:IMDB}:
when the order increases, the performance increases at first (first phase) up to
a critical order but then decreases later to 
become completely random (second phase).
This second phase is natural because the graph diffusion similarity $d^{(k)}(i,\cdot)$ 
converges to the equilibrium vector of the graph,
and the reversed and normalized versions converge to vector of a constant, 
all of which are doomed to be poor.
As for the first phase and its critical order, for the compact representation 
in Figure \ref{fig:IMDB}, the best 
performance is achieved at order 2, whereas for the sparse representation of tf-idf,
the optimal order is at 4 or 5.
From the discussion in Section \ref{sec:formula} we recall that the speed 
of the information propagation in the graph is highly related to 
the sparsity of the feature matrix $W$.
When $W$ or the graph is too sparse, the similarity between a pair of objects can 
hardly be adequately quantified if the initial probability mass is not well 
diffused in the entire system.
Thus, we need higher order to achieve good performance for sparse data.

\subsection{Customer Verbatim Data set} 
The customer verbatim data set is a private data set containing 21,621 paragraphs, 
each being a review of a company and its competitors' products and services
quantified by a collection of business customers.
Each of the reviews is labeled according to whether it is in favor of the company's competitors, 
and whether the customer likes the company's main product.
Again, we first use a CNN to do an embedding for extraction of compact feature vectors.
Eventually, each review is represented by a 128-dimensional vector of 
non-negative values.
We calculate the error curves for both the opinion-of-competitor label and the 
opinion-of-product label.
The curves are shown in Figure \ref{fig:verbatim}.

\begin{figure}[htp!]
    \centering
    \begin{tabular}{cc}
       \includegraphics[width=0.45\textwidth,trim={0 0 0 10cm}]{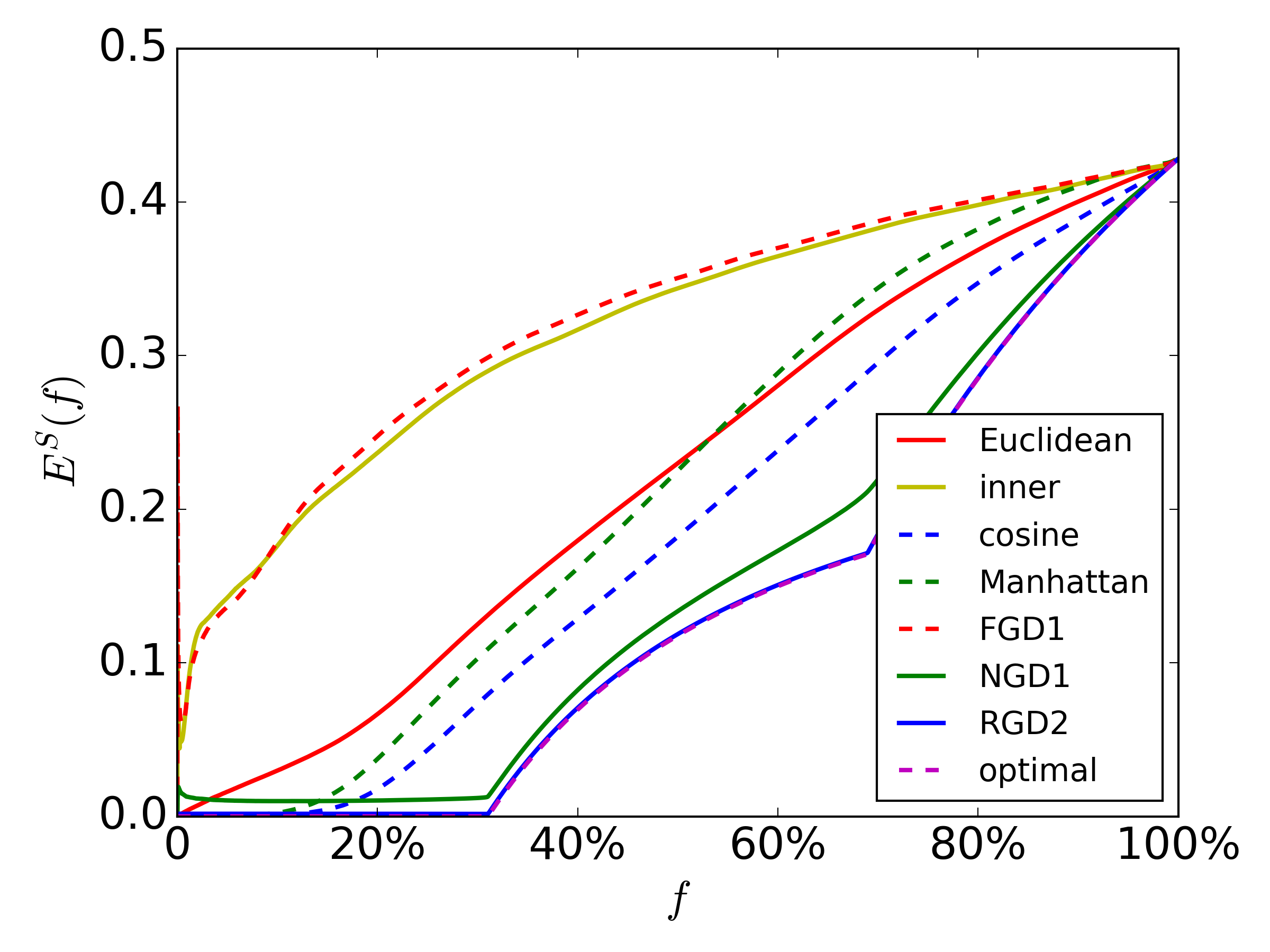}\hfill
 \includegraphics[width=0.45\textwidth]{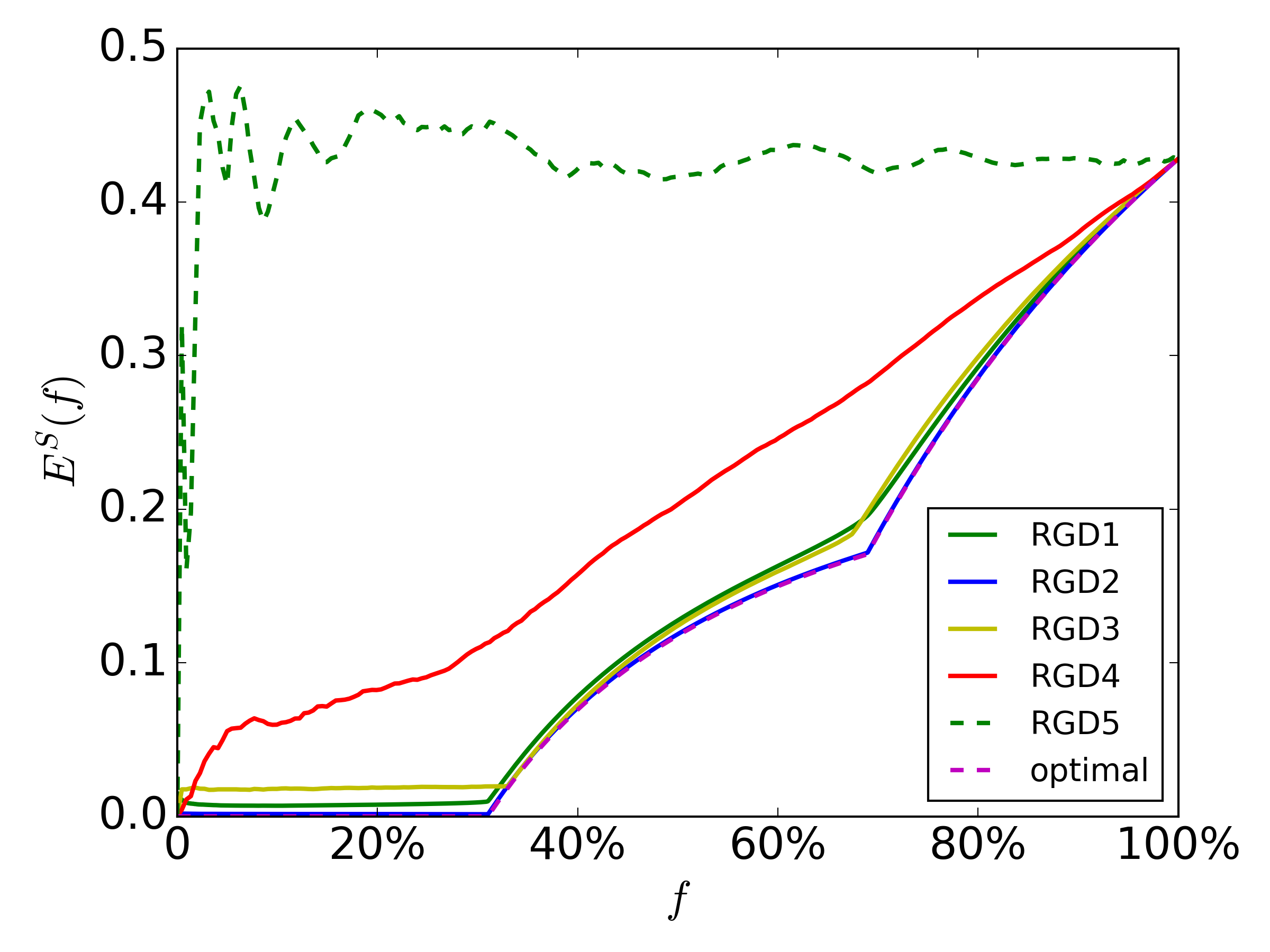} \\
       \includegraphics[width=0.45\textwidth,trim={0 0 0 10cm}]{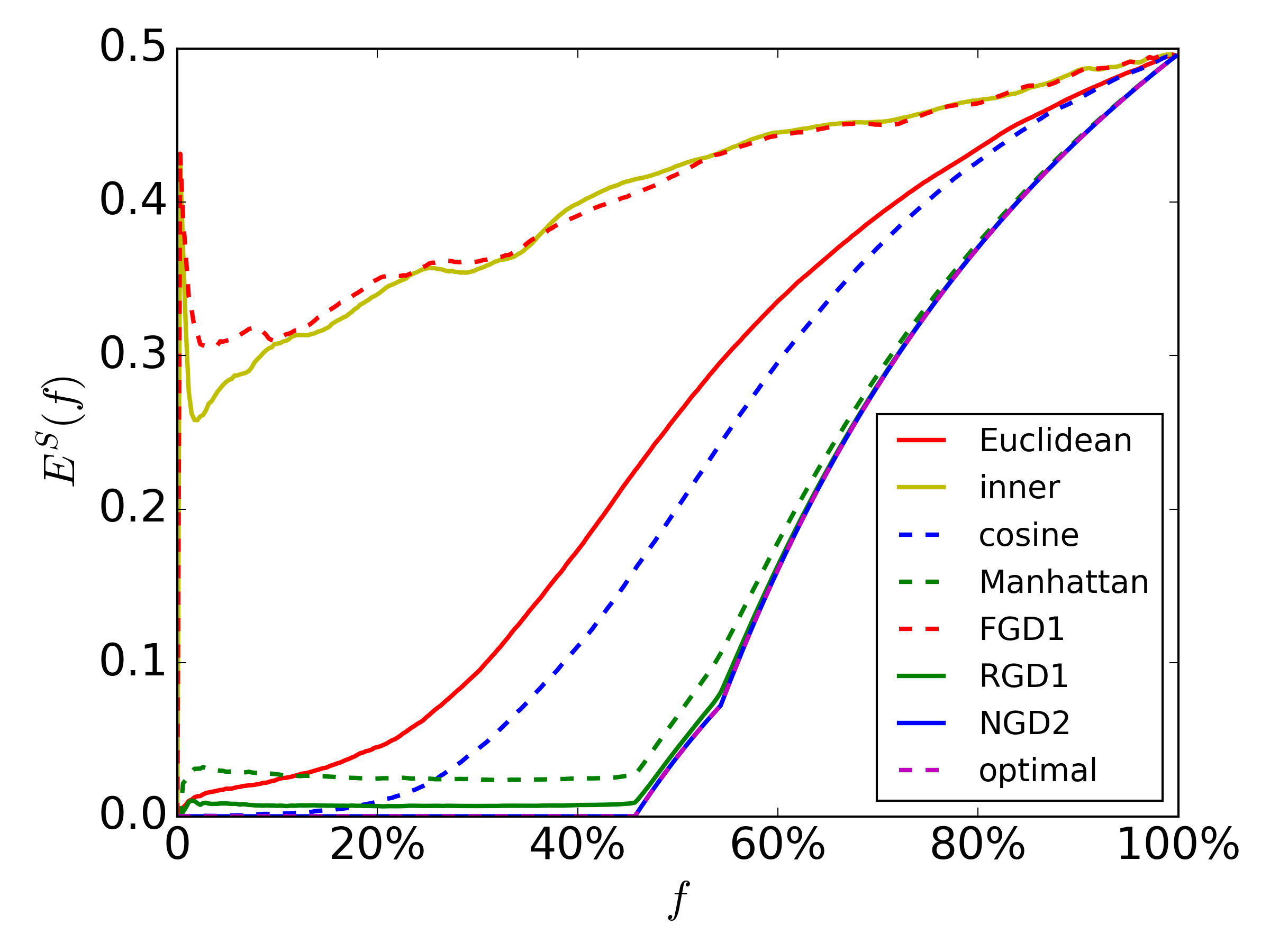}\hfill
 \includegraphics[width=0.45\textwidth]{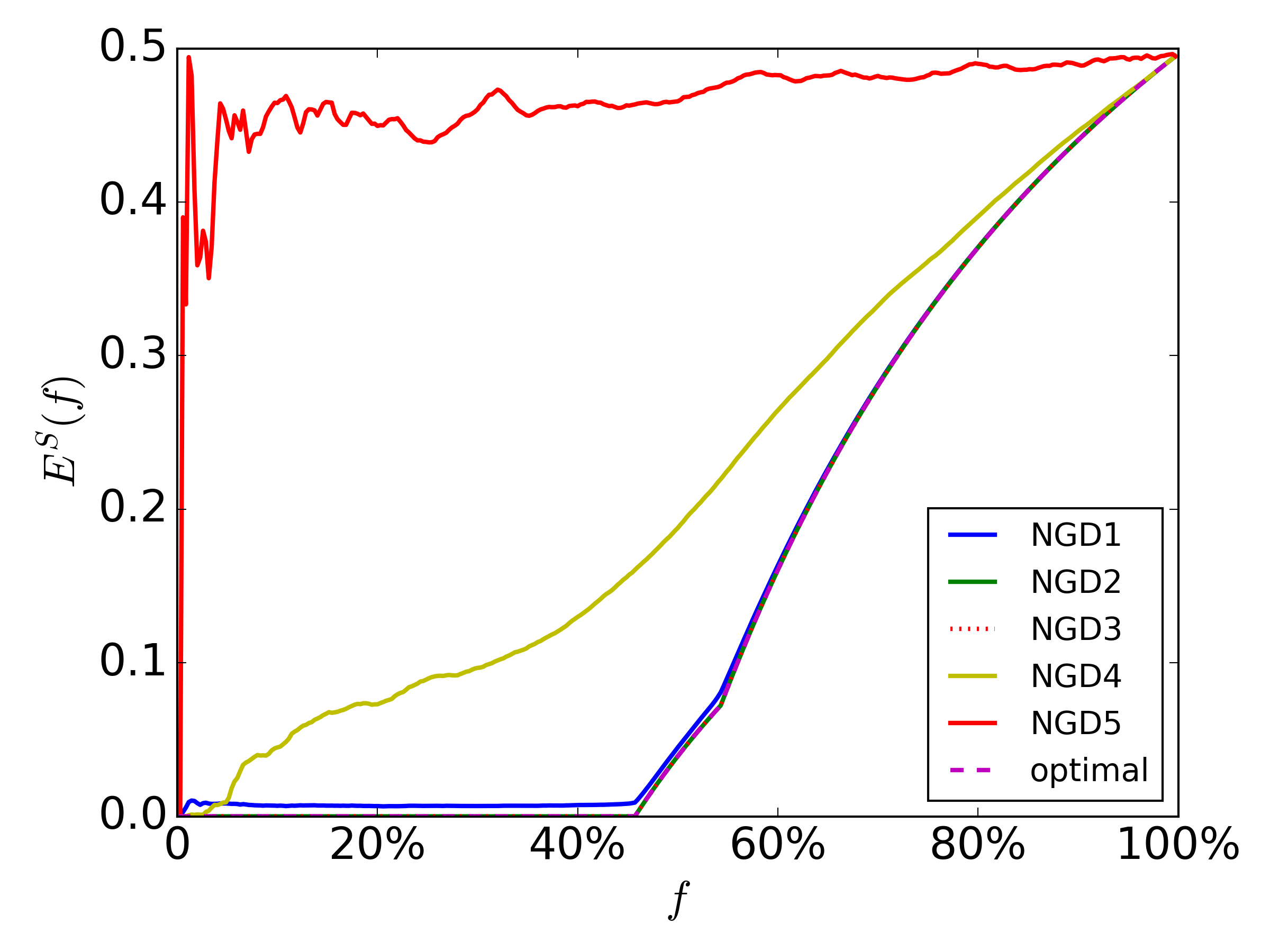} 
    \end{tabular}
    \caption{(Left) Error curves for the competitor label; (right) error curves for the product label.
    \label{fig:verbatim}} 
\end{figure}

We observe that the graph diffusion similarity family is able to achieve near 
optimal performance, 
while traditional similarities only perform in a mediocre way.
As before, the performance of the graph diffusion similarity family improves to reach a 
near optimal state 
(at order 2 for opinion-of-competitor and order 2 or 3 for opinion-of-product) and 
then deteriorates.
This again supports our conjecture that the optimal order for the graph diffusion 
similarity is positively correlated to the feature sparsity.

\subsection{ImageNet Dataset}
The performance of the graph diffusion similarities are also tested on the ImageNet dataset~\cite{deng2009imagenet}.
We adopt the GoogLeNet model trained on 1.2 million images for a classification problem for 1,000 classes~\cite{szegedy2015going}.
Again, we extract the last hidden layer, which contains 1,024 hidden nodes of non-negative values for each image.
For better visualization, we randomly pick $r$ classes out of the 1,000 different classes and calculate the corresponding error curves.
We repeat this for 200 times and compute the averaged error curves.
The case $r=2$ and $r=5$ are demonstrated in Figure \ref{fig:image}.

\begin{figure}[htp!]
    \centering
    \begin{tabular}{cc}
       \includegraphics[width=0.45\textwidth]{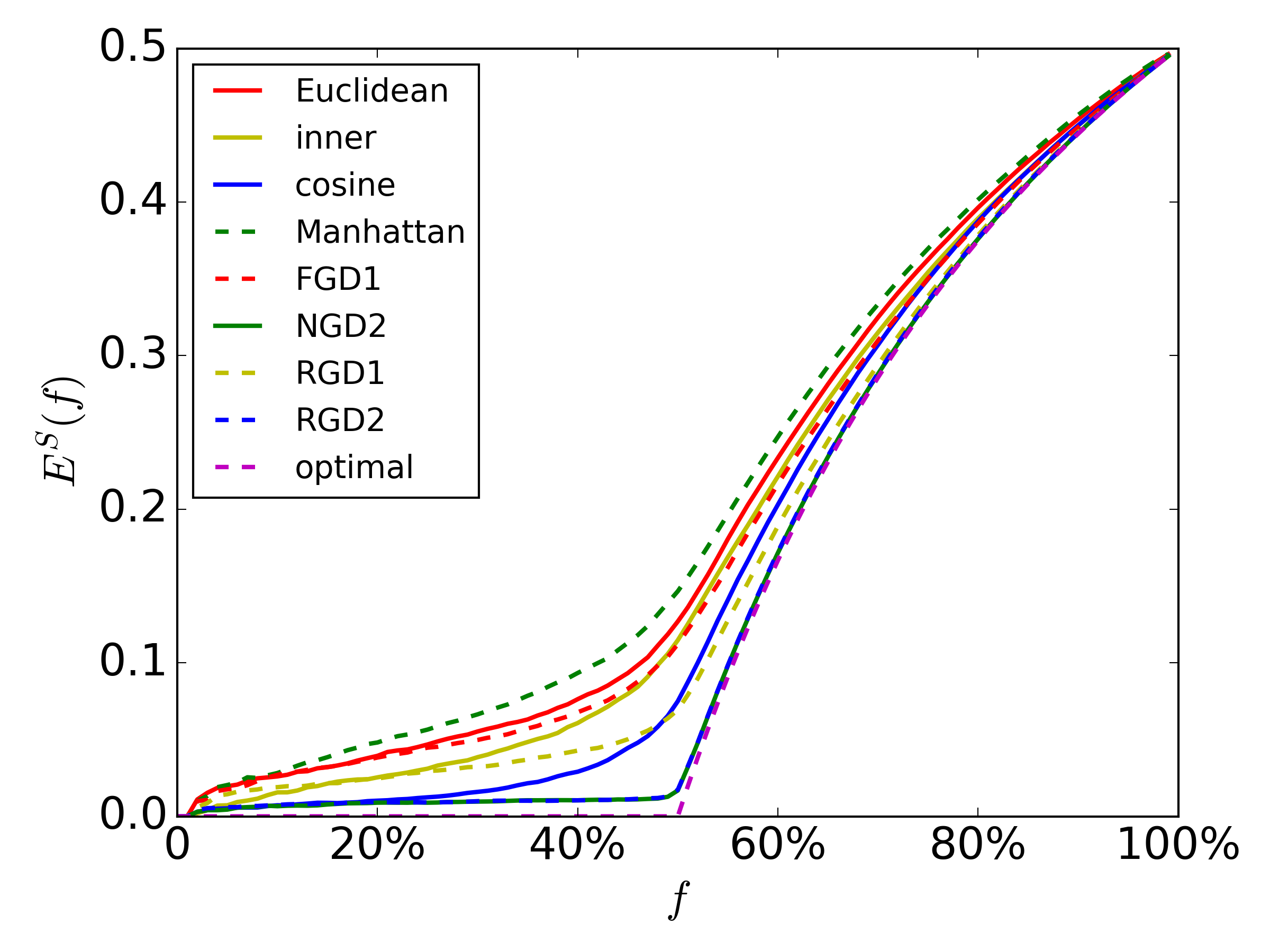}\hfill
 \includegraphics[width=0.45\textwidth]{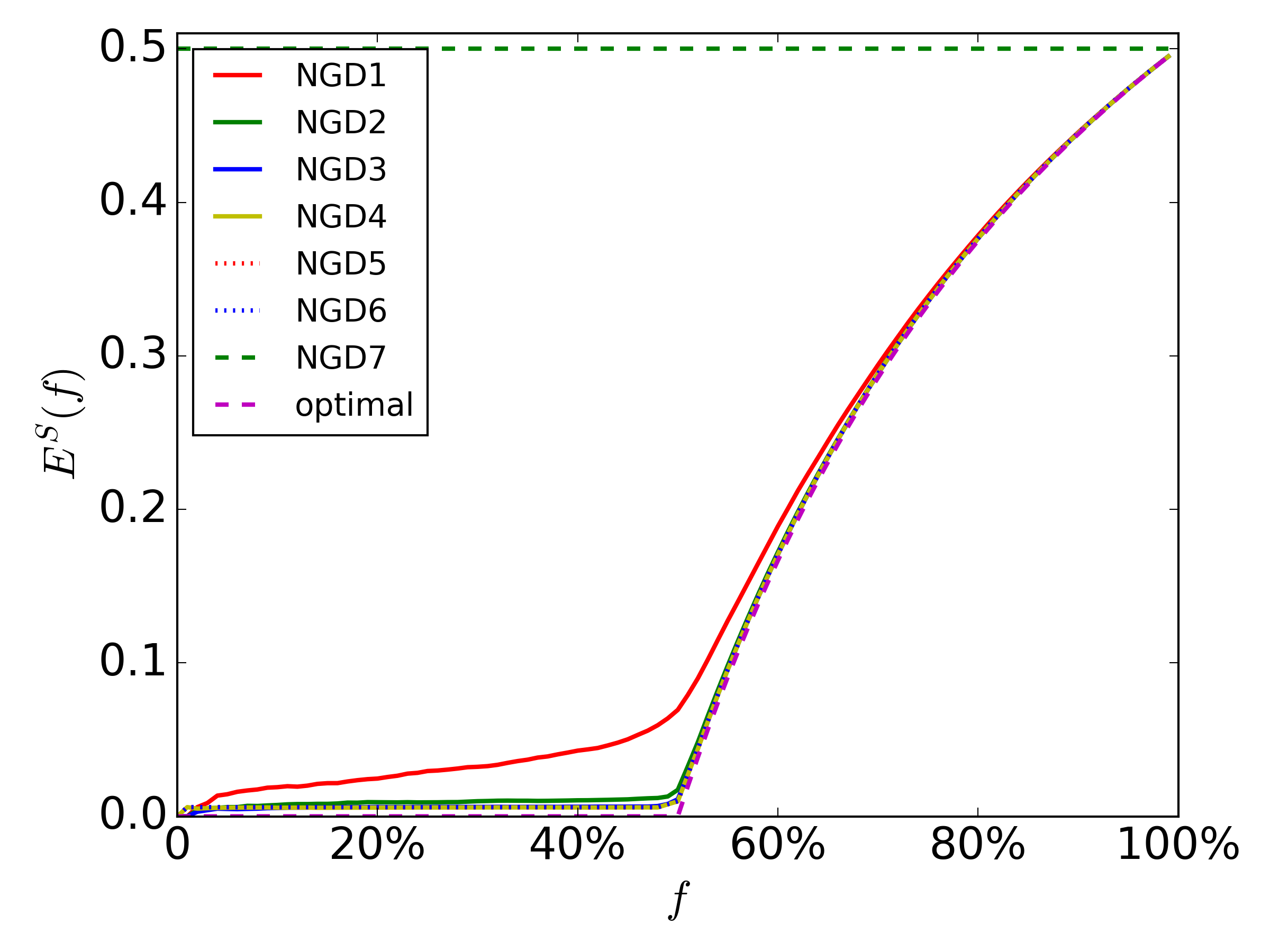} \\
       \includegraphics[width=0.45\textwidth]{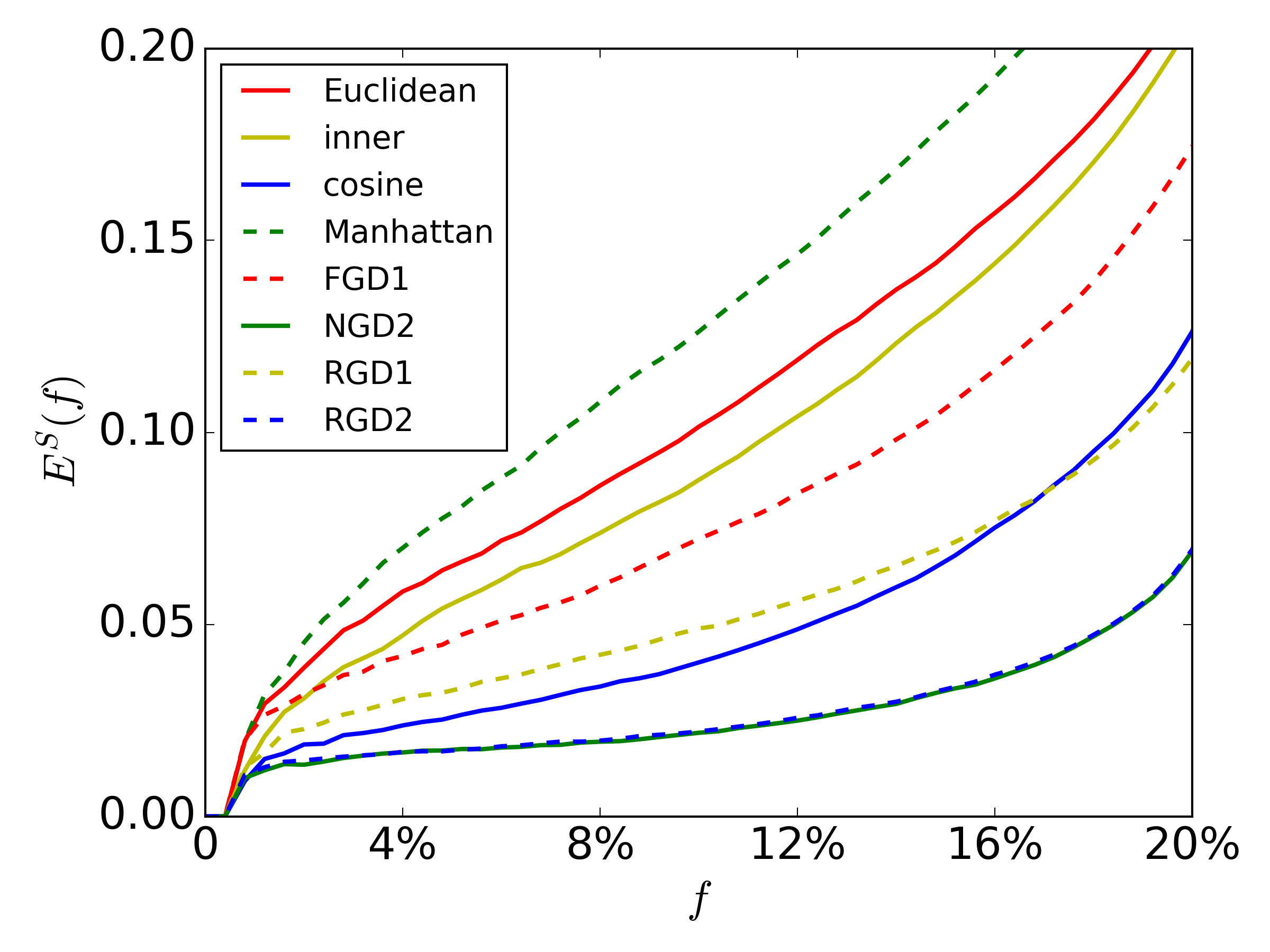}\hfill
 \includegraphics[width=0.45\textwidth]{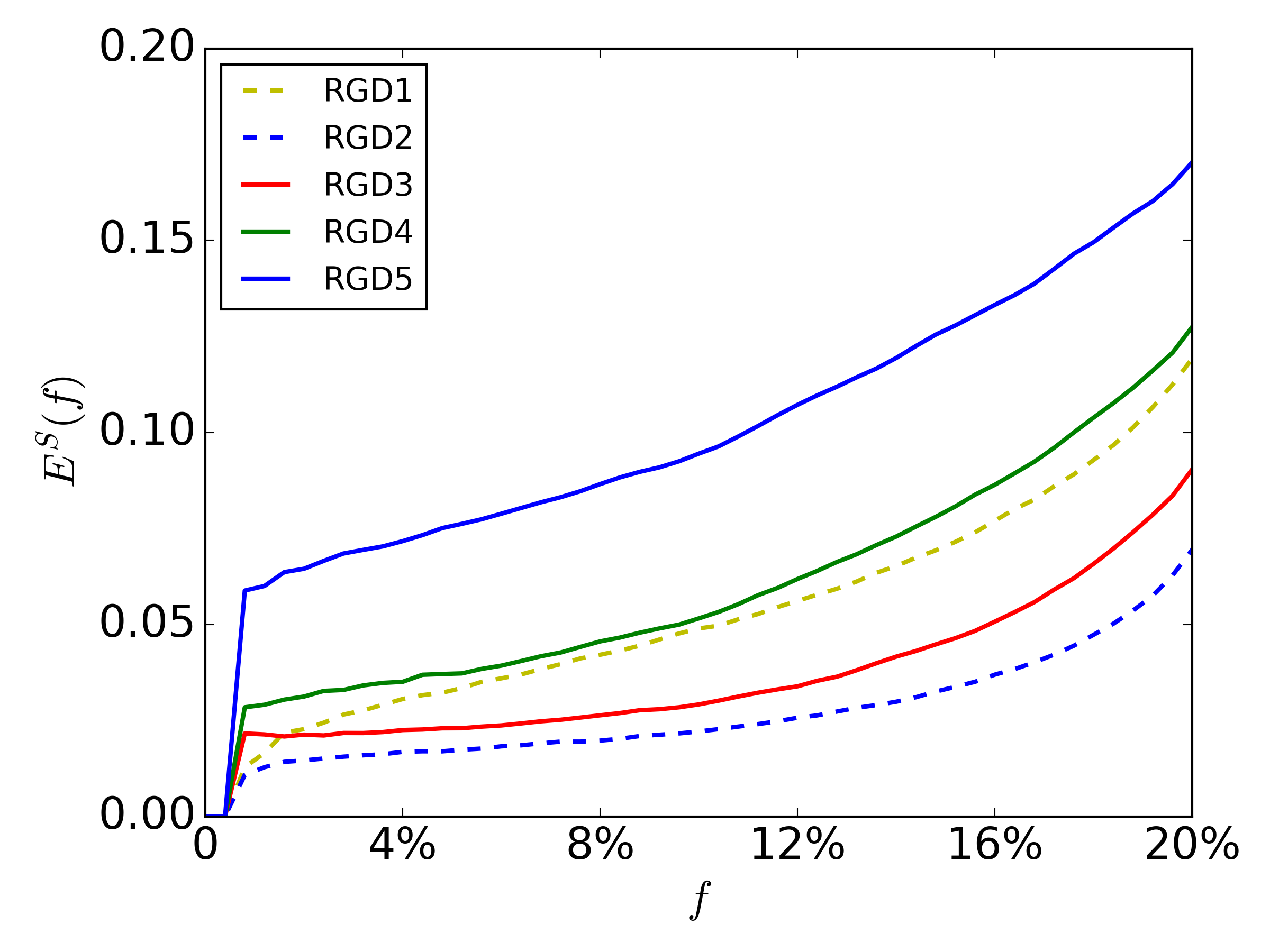} 
    \end{tabular}
    \caption{Error curves of (upper) $r=2$ and (lower) $r=5$ for GoogleNet dataset.
 \label{fig:image}}
\end{figure}

Again, compared to traditional similarities, the graph diffusion similarity family performs
better in terms of distinguishing different classes. 
For the case of $r=5$, the order 2 curve performs the best, while for the case of $r=2$, 
we observe different behaviors.
The performances of the curve for order 2 to 6 are almost identical, and suddenly the 
order 7 curve becomes the constant 0.5,
which indicates the similarity has homogenized for any pair of objects.

\section{Discussion and Future Work}\label{sec:conclusion}
A key observation in the last section was that when increasing the 
number of iterations, the performance of the graph diffusion similarity usually 
improves at first and then becomes worse.
The optimal number of iterations for sparse representations like the tf-idf is typically larger (4 to 5) and for dense embeddings it is smaller (2).
For structured data sets in Section \ref{sec:structured}, the same phenomenon is observed.
We have observed similar phenomenon in other data sets which, due to the page limit, 
 we do not demonstrated here.
Purely from the perspective of graph theory, object-feature sparsity means that it takes 
more iterations for mass to diffuse from one object to another,
which makes graph diffusion similarity less ideal with small iterations.
We conjecture that this phenomenon should be common for all kinds of 
data sets, and call for future work on building an explicit relation 
between the sparsity of the feature vectors and the optimal number of iterations.


\newpage
\bibliography{bib} 
\bibliographystyle{plain}
\newpage
\section*{Appendix: proofs of Theorem 1 and Theorem 2}
The detailed proofs for Theorem 1 and Theorem 2 are shown as follows.
Recall that an explicit formula for $g^{(1)}(i,j)$ is written as
\begin{equation}\label{eq:k=1}
g^{(1)}(i,j)=\sum_{s=1}^m\frac{w_{is}}{w_{i1}+\dots+w_{im}}\frac{w_{js}}{w_{1s}+\dots+w_{ns}}
=\frac{1}{p_{ii}}\sum_{s=1}^m\frac{w_{is}w_{js}}{q_{ss}}.
\end{equation}

\subsection*{Basic Properties}\label{sec:basic}
We begin with some basic properties of the graph 
diffusion similarity and its variants.
It is clear that $0\le g_d^{(k)}(i,j)=1- g^{(k)}(i,j)\le1$ since $g^{(k)}(i,j)$ 
is a (transition) probability or transferred mass.
Further, we have
\begin{proposition}\label{prop2}
If $g_d^{(k)}(i,j)=0$, then $i=j$ and object $i$ is isolated from the rest of the objects.
If $g_d^{(k)}(i,j)=1$ for any $k$, then object $j$ 
can not be reached by object $i$ in $\mathcal{G}$.
\end{proposition}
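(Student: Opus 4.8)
The plan is to lean entirely on two facts already available: that $G^{(k)}=S^k$ is row-stochastic for every $k$ (checked right after the definition of $S$, since $S\bm{1}=\bm{1}$ gives $S^k\bm{1}=\bm{1}$, and all entries are non-negative), and that its diagonal entries are strictly positive. The latter I would get from the explicit formula for $g^{(1)}$: $g^{(1)}(i,i)=\frac{1}{p_{ii}}\sum_{s=1}^m\frac{w_{is}^2}{q_{ss}}>0$, since $p_{ii}\neq 0$ forces some $w_{is}>0$ and each $q_{ss}\neq 0$; then, because one way for a length-$k$ walk on $\mathcal{G}$ to return to $i$ is to sit at $i$ at every step, $g^{(k)}(i,i)=(S^k)_{ii}\ge (S_{ii})^k>0$ for $k\ge 1$.

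For the first claim, suppose $g_d^{(k)}(i,j)=0$, i.e. $g^{(k)}(i,j)=1$. Since row $i$ of $S^k$ is a probability vector, every other entry of that row vanishes; in particular $(S^k)_{ii}=0$ if $j\neq i$, contradicting the strict positivity just established, so $j=i$ and $(S^k)_{ii}=1$. It then remains to upgrade $(S^k)_{ii}=1$ to $S_{ii}=1$. Conditioning the walk on its first step gives $1=(S^k)_{ii}=\sum_{\ell}S_{i\ell}\,(S^{k-1})_{\ell i}$, a convex combination of numbers in $[0,1]$, hence $(S^{k-1})_{\ell i}=1$ for every $\ell$ with $S_{i\ell}>0$. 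If some such $\ell$ were $\neq i$ we would get $(S^{k-1})_{\ell\ell}=0$, contradicting $(S^{k-1})_{\ell\ell}\ge (S_{\ell\ell})^{k-1}>0$; therefore $S_{i\ell}>0$ only for $\ell=i$, i.e. $S_{ii}=1$. Reading off $S_{ij}=\frac{1}{p_{ii}}\sum_s\frac{w_{is}w_{js}}{q_{ss}}$, the vanishing of every off-diagonal entry of row $i$ says $w_{is}w_{js}=0$ for all $s$ and all $j\neq i$: each feature present in object $i$ is present in no other object, so in $\mathcal{G}$ object $i$ has no edge to any other object, which is the asserted isolation.

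For the second claim I would first read ``$g_d^{(k)}(i,j)=1$ for any $k$'' as ``$g^{(k)}(i,j)=0$ for all $k\ge 1$'' (at a single order $g^{(k)}(i,j)=0$ only forbids walks of that exact length, so the ``for all $k$'' reading is the one that makes the statement true). Expanding $(S^k)_{ij}=\sum_{\ell_1,\dots,\ell_{k-1}}S_{i\ell_1}S_{\ell_1\ell_2}\cdots S_{\ell_{k-1}j}$ as a sum of non-negative terms indexed by length-$k$ walks $i\to\ell_1\to\cdots\to j$ in $\mathcal{G}$, the entry is positive exactly when some such walk uses only positive-weight edges. So if $(S^k)_{ij}=0$ for every $k$, no walk from $i$ to $j$ exists in $\mathcal{G}$, i.e. $j$ is not reachable from $i$.

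The only step that is not pure bookkeeping is the implication $(S^k)_{ii}=1\Rightarrow S_{ii}=1$: ``$(S^k)_{ii}=1$'' does not literally say the walk never leaves $i$, only that it is back at $i$ after exactly $k$ steps almost surely, so one needs the short conditioning argument above (equivalently, the observation that $\{i\}$ is then a closed class of the chain). Everything else --- stochasticity of $S^k$, positivity of the diagonal, and the walk-counting expansion of $(S^k)_{ij}$ --- is standard Markov-chain bookkeeping, and the translation back to features uses only the explicit entrywise formula for $S$.
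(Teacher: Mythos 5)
Your proposal is correct and follows essentially the same route as the paper's proof: both rest on the strictly positive self-loop probability $g^{(1)}(i,i)>0$ (so mass can never fully leave $i$, forcing $i=j$ and then isolation) and on the non-negative walk expansion of $(S^k)_{ij}$ for the reachability claim, with the same ``for all $k$'' reading of the second hypothesis. Your conditioning argument upgrading $(S^k)_{ii}=1$ to $S_{ii}=1$ simply makes rigorous a step the paper treats informally.
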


\begin{proof}
If $g_d^{(k)}(i,j)=0$, then $g^{(k)}(i,j)=1$, then all the probability 
measure at $i$ is transferred to $j$ after $k$ rounds. 
If $i\neq j$, then there should be a feature $s$ such that $w_{is}>0$ in order 
for the probability starting at $i$ to propagate to $j$ via a path.
However, $w_{is}>0$ means that $i$ is also connected to itself and thus 
there will always be a positive probability at $i$, which implies $g_d^{(k)}(i,j)>0$.
Thus, by contradiction, $i=j$ and for any $s$ such that $w_{is}>0$, $w_{ls}=0$ for $l\neq i$, 
hence $i$ is isolated.
On the other hand, if $g_d^{(k)}(i,j)=1$ for all $k$, then there is zero 
probability transferred from $i$ to $j$ in any steps,
therefore there is no path from $i$ to $j$ in $\mathcal{G}$.
\end{proof}

Notice that $g_d^{(k)}(i,i)$ is not necessarily 0, due to dispersion of mass to other nodes through common features.
As for symmetry in the graph diffusion similarity, it is clear from \eqref{eq:k=1} 
that $g_d^{(1)}(i,j)$ is not symmetric in general.
However, we have
\begin{proposition}\label{prop:symmetry}
A sufficient condition for $G^{(1)}$ to be symmetric is that $p_{ii}$ are the same for all $i$.
If the bipartite graph is connected, then the condition that all the $p_{ii}$ are the same is also necessary for $G^{(1)}$ to be symmetric.
\end{proposition}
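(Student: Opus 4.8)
The plan is to route $G^{(1)}$ through a manifestly symmetric matrix. From the explicit formula \eqref{eq:k=1} we have $g^{(1)}(i,j)=\tfrac{1}{p_{ii}}A_{ij}$, where $A_{ij}:=\sum_{s=1}^m \tfrac{w_{is}w_{js}}{q_{ss}}$; in matrix form $A=WQ^{-1}W^T$, so $A=A^T$ and $G^{(1)}=P^{-1}A$. The whole statement then reduces to comparing $A_{ij}/p_{ii}$ with $A_{ji}/p_{jj}=A_{ij}/p_{jj}$. For \emph{sufficiency}, if all $p_{ii}$ share a common value $p$, then $G^{(1)}=p^{-1}A$ is symmetric because $A$ is; note no connectivity hypothesis is used here.

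For \emph{necessity}, assume $G^{(1)}=(G^{(1)})^T$, i.e.\ $A_{ij}\bigl(\tfrac{1}{p_{ii}}-\tfrac{1}{p_{jj}}\bigr)=0$ for every pair $i,j$. The key observation is that, since all $w_{is}\ge 0$ and $q_{ss}>0$, the quantity $A_{ij}$ is a sum of nonnegative terms, so $A_{ij}>0$ holds \emph{exactly} when there is a feature $s$ with $w_{is}>0$ and $w_{js}>0$, that is, exactly when objects $i$ and $j$ are adjacent in the object--object graph $\mathcal{G}$ (and $A_{ii}>0$ always, since $p_{ii}\neq 0$). Hence for every edge $i\sim j$ of $\mathcal{G}$ we are forced to have $p_{ii}=p_{jj}$.

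It remains to propagate this equality across all objects, which is where connectivity enters. Connectedness of the bipartite graph $\mathcal{B}$ implies connectedness of $\mathcal{G}$: any path in $\mathcal{B}$ joining two object nodes $i$ and $j$ necessarily alternates object/feature nodes, so it has the form $i=i_0,s_1,i_1,s_2,\dots,s_\ell,i_\ell=j$, and each consecutive pair $i_{t},i_{t+1}$ shares the feature $s_{t+1}$, giving an edge $i_t\sim i_{t+1}$ in $\mathcal{G}$. Chaining the equalities $p_{i_t i_t}=p_{i_{t+1}i_{t+1}}$ along this path yields $p_{ii}=p_{jj}$, and since $i,j$ were arbitrary, all $p_{ii}$ coincide.

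The only delicate point is this last connectivity-transfer step: one must observe that a walk in $\mathcal{B}$ between two object vertices is genuinely alternating, so that consecutive objects along it really do share a feature and hence an $\mathcal{G}$-edge. Once that is pinned down, everything else is routine bookkeeping, and the equivalence follows.
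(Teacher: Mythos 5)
Your proposal is correct and follows essentially the same route as the paper: sufficiency by factoring $G^{(1)}=P^{-1}A$ with $A=WQ^{-1}W^T$ symmetric, and necessity by noting that two objects sharing a feature forces $p_{ii}=p_{jj}$ under symmetry, then chaining these equalities along a path guaranteed by connectedness of $\mathcal{B}$. Your write-up is just a more explicit version (spelling out $A_{ij}>0$ iff $i,j$ share a feature, and that bipartite connectivity yields object-object connectivity), which is fine.
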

\begin{proof}
The first part is straightforward by \eqref{eq:k=1}.
For the second part of the statement, notice that for any pair $(i,j)$, there should be a connected path $i,k,\dots,j$,
and $i$ being connected to $k$ means $w_{i1}w_{k1}+\dots+w_{im}w_{km}>0$ and thus $p_{ii}=p_{kk}$ according to \eqref{eq:k=1}.
The same argument holds for any consecutive objects in the path between $i$ and $j$, which leads to the conclusion that $p_{ii}=p_{jj}$.
\end{proof}

\subsection*{The Triangle Inequality}
In this section, we assume $p_{ii}$ is a constant for all $i$ which could be
achieved via scaling. 
We will show that this condition ensures the resulting graph diffusion distance will be a
metametric.
It is clear that the normalized graph diffusion distance satisfies this condition, since it 
normalizes the feature weights before calculating similarity.
Further, for distributional data and categorical data this condition always holds since
for distributions $p_{ii}=1$, and for categorical data, $p_{ii}$ equals the number of categories.
Therefore, the forward, reversed, and normalized variants of the graph diffusion distance 
are identical when applied to distributions or categorical data.
In the following analysis, we will use $n_d^{(k)}(\cdot,\cdot)$ for concreteness, 
and the proof for distributions and categorical data directly follows.

We first consider the case when the number of iterations is 1. In such a case, we will prove:
\begin{proposition}\label{prop1}
For any row-stochastic matrix $W$ and its column-sum diagonal matrix $Q$, 
define matrix $D=(d_{ij})$ as $D:=\bm{1}\bm{1}^T-WQ^{-1}W^{T}$.
Then $D$ is a symmetric matrix, and the triangular inequality 
$d_{ij}+d_{jk}\ge d_{ik}$ holds for any $1\le i,j,k\le n$.
\end{proposition}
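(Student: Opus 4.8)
The plan is to drop down to matrix entries immediately. Writing $q_{ss}=\sum_{l=1}^n w_{ls}$ for the $s$-th diagonal entry of $Q$, the $(i,j)$ entry of $WQ^{-1}W^T$ is $\sum_{s=1}^m w_{is}w_{js}/q_{ss}$, so
\[
d_{ij}=1-\sum_{s=1}^m\frac{w_{is}w_{js}}{q_{ss}},
\]
which is visibly unchanged under swapping $i$ and $j$; this settles the symmetry of $D$ (equivalently, $WQ^{-1}W^T=(WQ^{-1/2})(WQ^{-1/2})^T$ and $\bm{1}\bm{1}^T$ are both symmetric). For the triangle inequality I would expand $d_{ij}+d_{jk}-d_{ik}$, cancel the constant terms, and use row-stochasticity $\sum_{s}w_{js}=1$ to rewrite the desired statement $d_{ij}+d_{jk}\ge d_{ik}$ as the single scalar inequality
\[
\sum_{s=1}^m\frac{w_{is}w_{js}+w_{js}w_{ks}-w_{is}w_{ks}}{q_{ss}}\le 1 .
\]
Everything then reduces to this bound.

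To prove it, the strategy is to show that each summand is at most $w_{js}$, so that summing over $s$ and invoking $\sum_s w_{js}=1$ once more closes the argument. I would split into two cases according to whether $i=k$. If $i\ne k$, then $w_{is}$ and $w_{ks}$ are distinct nonnegative terms of the sum defining $q_{ss}$, so $q_{ss}\ge w_{is}+w_{ks}$, whence $w_{js}q_{ss}\ge w_{js}(w_{is}+w_{ks})=w_{is}w_{js}+w_{js}w_{ks}\ge w_{is}w_{js}+w_{js}w_{ks}-w_{is}w_{ks}$, i.e. the numerator is $\le w_{js}q_{ss}$. If $i=k$, the numerator collapses to $w_{is}(2w_{js}-w_{is})$, and the elementary inequality $(w_{js}-w_{is})^2\ge 0$ rearranges to $w_{is}(2w_{js}-w_{is})\le w_{js}^2\le w_{js}q_{ss}$, using only $q_{ss}\ge w_{js}$. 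In both cases dividing by $q_{ss}$ gives the termwise bound.

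I do not expect a genuine obstacle; the only real care is in the bookkeeping of degenerate index patterns. The point to notice is that the clean bound $q_{ss}\ge w_{is}+w_{ks}$ fails when $i=k$ (it would demand $q_{ss}\ge 2w_{is}$, false in general), which is exactly why a separate completing-the-square step is needed there. The coincidences $i=j$ or $j=k$, by contrast, require no special handling: they are subsumed by the cases above, where the target inequality degenerates gracefully --- to $d_{ii}\ge 0$ when $i=j$, and to $2d_{ij}\ge d_{ii}$ when $i=k$ --- and the same termwise estimate still applies. Finally, note that the argument uses nothing about $W$ beyond entrywise nonnegativity and row sums equal to $1$, which is precisely the data the normalized variant supplies, so the proposition feeds directly into the $k=1$ case of Theorem \ref{th1}.
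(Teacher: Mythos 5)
Your proof is correct, and the reduction to the scalar inequality $\sum_s\bigl(w_{is}w_{js}+w_{js}w_{ks}-w_{is}w_{ks}\bigr)/q_{ss}\le 1$ matches the paper's starting point, but the key estimate is genuinely different. The paper drops the cross term $-w_{is}w_{ks}$, lower-bounds $q_{ss}$ by the three-row partial sum $w_{is}+w_{js}+w_{ks}$, and then applies the weighted inequality $\frac{xy}{x+y}\le\frac19x+\frac49y$ (i.e.\ $(x-2y)^2\ge0$) with $x=w_{is}+w_{ks}$, $y=w_{js}$; summing over $s$ and using row-stochasticity of all three rows gives the bound $\tfrac23<1$. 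You instead bound each summand termwise by $w_{js}$, using $q_{ss}\ge w_{is}+w_{ks}$ when $i\ne k$ and a separate completing-the-square step when $i=k$, then sum with $\sum_s w_{js}=1$. Your route is more elementary, uses only the stochasticity of row $j$, and --- unlike the paper's ``without loss of generality'' reduction to three distinct indices --- explicitly handles the degenerate coincidences $i=j$, $j=k$, $i=k$, which is a small but real gain in rigor. What it gives up is the sharper constant: the paper's argument shows the triangle defect $d_{ij}+d_{jk}-d_{ik}$ is bounded below by $1-\tfrac23=\tfrac13$ of slack (equivalently the left side of the scalar inequality is at most $\tfrac23$, which the paper shows is tight by an explicit $W$), and it is exactly this $\tfrac23$ that is recycled in the proof of Theorem~\ref{th}, where the condition $\min p_{ii}/\max p_{ii}>2/3$ absorbs the row-sum distortion. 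Your bound of $1$ suffices for Proposition~\ref{prop1} and for Theorem~\ref{th1}, but it would not reproduce Theorem~\ref{th} without reinstating the paper's sharper estimate.
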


\begin{proof}
Based on the definition of $D$, we have
\begin{equation}
d_{ij}=1-\sum_{k=1}^m\frac{w_{ik}w_{jk}}{q_{kk}},
\end{equation}
and thus the symmetry of $D$ follows immediately.

For the triangle inequality, without loss of generality, we only need to prove that 
$d_{12}+d_{23}\ge d_{13}$, which is expanded as
\begin{equation}\label{eq:tri}
\sum_{k=1}^m\frac{(w_{1k}+w_{3k})w_{2k}-w_{1k}w_{3k}}{q_{kk}}\le 1.
\end{equation}
Notice that $w_{1k}w_{3k}\ge 0$ and $q_{kk}\ge w_{1k}+w_{2k}+w_{3k}$, then it is sufficient to prove that
$$\sum_{k=1}^m\frac{(w_{1k}+w_{3k})w_{2k}}{w_{1k}+w_{2k}+w_{3k}}\le 1.$$
It is easy to check $$\frac{xy}{x+y}\le\frac{1}{9}x+\frac{4}{9}y$$ holds for any $x$ and $y$ given $x+y>0$,
since it is equivalent to $(x-2y)^2\ge 0$.
By letting $x=w_{1k}+w_{3k}$ and $y=w_{2k}$, we have
$$\sum_{k=1}^m \frac{(w_{1k}+w_{3k})w_{2k}}{w_{1k}+w_{2k}+w_{3k}}\le \frac{1}{9}\sum_{k=1}^m(w_{1k}+w_{3k})+\frac{4}{9}\sum_{k=1}^mw_{2k}=\frac{2}{3}<1,$$
which completes the proof of triangle inequality, and thus the proposition. 
\end{proof}

We note that the coefficient $2/3$ in the above is tight in the sense 
that there exists a construction of $W$ such that all the above inequalities 
become equality. 
The construction is as follows. 
\begin{itemize}
\item
Let $w_{11}=1$, $w_{12}=0$, $w_{21}=w_{22}=1/2$, $w_{31}=0$, $w_{32}=1$.
\item
Let $w_{1k}=w_{2k}=w_{3k}=0$ for $k\ge 3$.
\item
For all $r\ge 4$, let $w_{r1}=w_{r2}=0$.
\item Set $w_{rk}$ to any non-negative value such that $w_{r3}+\dots+w_{rm}=1$.
\end{itemize}
It is clear that $W$ under the above construction is row-stochastic.
Besides, $g_d^{(1)}(1,2)=g_d^{(1)}(2,3)=1/3$ and $g_d^{(1)}(1,3)=0$, 
and hence $g_d^{(1)}(1,2)+g_d^{(1)}(2,3)-g_d^{(1)}(1,3)=2/3$.

Next we consider the triangle inequality for general order $r$. We will prove that:
\begin{proposition}\label{prop4}
For any row-stochastic matrix $W$ with its column-sum diagonal matrix $Q$, 
define matrix $D^{(r)}:=(d^{(r)}_{ij})$ as $D^{(r)}:=\bm{1}\bm{1}^T-(WQ^{-1}W^{T})^r$.
Then the triangular inequality 
$d^{(r)}_{ij}+d^{(r)}_{jk}\ge d^{(r)}_{ik}$ holds for any $1\le i,j,k\le n$ 
and any positive integer $r$.
\end{proposition}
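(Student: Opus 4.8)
The plan is to reduce the statement to a few structural properties of the matrix $M := (WQ^{-1}W^T)^r$ and then settle the triangle inequality by a short case distinction. Write $S := WQ^{-1}W^T$, which is exactly the transition matrix $S$ of \eqref{eq:matrix} in the normalized ($P = I$) case. First I would record four facts about $S$: it is \emph{symmetric} (since $Q^{-1}$ is diagonal, $S^T = WQ^{-1}W^T = S$); it is \emph{positive semidefinite} (writing $S = (WQ^{-1/2})(WQ^{-1/2})^T$, which is legitimate because $Q$ has strictly positive diagonal by the standing assumption); it is \emph{entrywise nonnegative} (a product of nonnegative matrices); and it is \emph{doubly stochastic} ($S\bm{1} = \bm{1}$ is the computation already given after \eqref{eq:matrix}, and symmetry then gives $\bm{1}^T S = \bm{1}^T$). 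Raising to the power $r$ preserves all four: $M = S^r$ is symmetric, entrywise nonnegative, doubly stochastic, and PSD (its eigenvalues are the $r$-th powers of the nonnegative eigenvalues of $S$). In particular $0 \le M_{ij} \le 1$ for all $i,j$, every row of $M$ sums to $1$, and each $2\times 2$ principal minor of $M$ is nonnegative, i.e. $M_{ij}^2 \le M_{ii}M_{jj}$.

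Since $d^{(r)}_{ij} = 1 - M_{ij}$ and $M$ is symmetric, $D^{(r)}$ is symmetric, and the triangle inequality $d^{(r)}_{ij} + d^{(r)}_{jk} \ge d^{(r)}_{ik}$ is equivalent to
\[
M_{ij} + M_{jk} \le 1 + M_{ik}.
\]
I would prove this by splitting on whether $i = k$. If $i \ne k$, then $M_{ij} = M_{ji}$ and $M_{jk}$ occupy two distinct columns of row $j$, so by nonnegativity and the row-sum property $M_{ij} + M_{jk} = M_{ji} + M_{jk} \le \sum_{l=1}^n M_{jl} = 1 \le 1 + M_{ik}$. If $i = k$, the inequality reads $2M_{ij} \le 1 + M_{ii}$; here I would invoke the principal-minor bound $M_{ij} \le \sqrt{M_{ii}M_{jj}} \le \sqrt{M_{ii}}$ (using $M_{jj}\le 1$) together with the elementary fact $2\sqrt{t} \le 1 + t$ for $t \ge 0$, i.e. $(\sqrt{t}-1)^2 \ge 0$, applied to $t = M_{ii}$. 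These two cases are exhaustive, which proves the proposition.

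I do not anticipate a serious obstacle; the only points needing care are that all four structural properties of $S$ genuinely survive taking powers (the PSD part in particular relies on $S$ being symmetric so that its spectral decomposition has nonnegative eigenvalues), and that the degenerate case $i = k$ is not skipped, since this is precisely where the crude ``two entries of a stochastic row'' estimate fails and positive semidefiniteness is actually used. It is also worth flagging that, in contrast to Proposition \ref{prop1}, where the sharper constant $2/3$ was extracted from the explicit form of $W$, only the bound $1$ is asserted for general $r$, and the argument above is tailored exactly to that weaker target.
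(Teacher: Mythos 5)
Your proof is correct, but it takes a genuinely different route from the paper. The paper never argues about $S^r$ directly: it reduces Proposition \ref{prop4} to Proposition \ref{prop1} by rewriting the power in the same ``feature-matrix'' form, splitting on parity --- for $r=2u$ it sets $\widetilde{W}=(WQ^{-1}W^T)^u$, notes it is doubly stochastic, and applies Proposition \ref{prop1} with identity column-sum matrix; for $r=2u+1$ it sets $\overline{W}=\widetilde{W}W$ and checks that $\overline{W}$ is row-stochastic with the \emph{same} column-sum matrix $Q$, so that $D^{(r)}=\bm{1}\bm{1}^T-\overline{W}Q^{-1}\overline{W}^T$ again falls under Proposition \ref{prop1}. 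You instead extract four structural properties of $S=WQ^{-1}W^T$ (symmetric, entrywise nonnegative, doubly stochastic, positive semidefinite via the factorization $ (WQ^{-1/2})(WQ^{-1/2})^T$), observe all four survive taking powers, and prove $M_{ij}+M_{jk}\le 1+M_{ik}$ directly: for $i\ne k$ by summing two distinct entries of a stochastic row, and for $i=k$ by the $2\times2$ principal-minor bound plus $2\sqrt{t}\le 1+t$. Each step checks out, including the spots you flag (PSD-ness of powers needs the symmetry of $S$; positivity of $Q$'s diagonal is the paper's standing assumption). What each approach buys: yours is more elementary and more general --- it applies to any symmetric, nonnegative, doubly stochastic, PSD matrix, not just those of Gram-like form --- and it explicitly handles the degenerate case $i=k$, which the paper's ``without loss of generality'' in Proposition \ref{prop1} quietly passes over and which genuinely requires PSD-ness (a symmetric permutation matrix shows the row-sum argument alone fails there). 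The paper's reduction, on the other hand, inherits the quantitative slack of Proposition \ref{prop1} (the relevant sum is bounded by $2/3$, not merely $1$), which is what connects to the tightness construction given after that proposition; your argument is tailored to the weaker target of the bare triangle inequality, exactly as you note.
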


\begin{proof}
The statement for $r=1$ is proved in Proposition \ref{prop1}.
We will consider the case $r=2u$ (an even number) and the 
case $r=2u+1$ (an odd number) separately.
For the case $r=2u$, denote $\widetilde{W}=(WQ^{-1}W^{T})^u$, then $\widetilde{W}$ is symmetric.
Since both $W$ and $Q^{-1}W^{T}$ are row-stochastic, $\widetilde{W}$ is actually doubly-stochastic, and
$D^{(r)}=\bm{1}\bm{1}^T-\widetilde{W}\widetilde{W}^T$.
Since $W$ is row-stochastic, then so is $\widetilde{W}=WQ^{-1}W^{T}$,
thus the corresponding column-sum diagonal matrix $\widetilde{Q}$ for $\widetilde{W}$ 
becomes an identity matrix.
By regarding $\widetilde{W}$ as a new feature matrix $W$ in 
Proposition \ref{prop1}, the triangular inequality follows immediately.

For the case $r=2u+1$, let $\overline{W}=\widetilde{W}W$, then
\begin{equation}
D^{(r)}=\bm{1}\bm{1}^T-\widetilde{W} WQ^{-1}W^T \widetilde{W}^T=\bm{1}\bm{1}^T-\overline{W}Q^{-1}\overline{W}^T.
\end{equation}
Recalling Proposition \ref{prop1}, we only need to show that $\overline{W}$ is a 
row-stochastic matrix and $Q$ is its column-sum matrix.
Since both $\widetilde{W}$ and $W$ are row-stochastic matrices, 
so is $\overline{W}=\widetilde{W}W$.
For its column-sum, since $\widetilde{W}$ is doubly-stochastic, 
$\overline{W}^T\bf{1}=W^T\widetilde{W}^T\bf{1}=W^T\bf{1}$,
thus $\overline{W}$ and $W$ share the same column-sum matrix $Q$, which completes the proof.
\end{proof}

Theorem 1 follows directly from Proposition \ref{prop2}, Proposition \ref{prop:symmetry}, Proposition \ref{prop1}, and Proposition \ref{prop4}.

\subsection*{Analysis of $g_d^{(k)}(\cdot,\cdot)$ and $r_d^{(k)}(\cdot,\cdot)$}
For the forward graph diffusion distance $g_d^{(k)}(\cdot,\cdot)$ and its reversed version $r_d^{(k)}(\cdot,\cdot)$, symmetry is no longer guaranteed.
Besides, triangle inequality need not hold in general.
We aim to find sufficient conditions for these distances to be at least quasi-metametrics.
A counter-example to the triangle inequality is provided by these 3 objects and 2 features: $W=[1,0;2,6;0,12]$.
It is straightforward to check that $g_d^{(1)}(1,2)+g_d^{(1)}(2,3)-g_d^{(1)}(1,3)=1/3+1/2-1<0$,
and also $r_d^{(1)}(3,2)+r_d^{(1)}(2,1)-g_d^{(1)}(3,1)=1/3+1/2-1<0$.
The reason for failure of the triangle inequality is that different features 
have distinct total sums of features $p_{ii}$.
Now we are in a position to prove Theorem 2:

\begin{proof}
Based on the discussion in Section \ref{sec:basic}, we only need to prove 
the triangle inequality.
Again, we only need to prove $g_d^{(1)}(1,2)+g_d^{(1)}(2,3)\ge g_d^{(1)}(1,3)$ without 
loss of generality.
Following the proof in Proposition \ref{prop1}, it suffices to show that
\begin{equation}
\sum_{k=1}^m\frac{(w_{1k}/p_{11}+w_{3k}/p_{22})w_{2k}-w_{1k}w_{3k}/p_{11}}{q_{kk}}\le 1.
\end{equation}
Following the same argument in the proof of Proposition \ref{prop1}, we have
\begin{eqnarray*}
&&\sum_{k=1}^m\frac{(w_{1k}/p_{11}+w_{3k}/p_{22})w_{2k}-w_{1k}w_{3k}/p_{11}}{q_{kk}}\\
&\le&\sum_{k=1}^m \frac{(w_{1k}/p_{11}+w_{3k}/p_{22})w_{2k}}{w_{1k}+w_{2k}+w_{3k}} 
 \le \frac{2}{3}\frac{\max p_{ii}}{\min p_{ii}}=1,
\end{eqnarray*}
which completes the proof of Theorem 2.
\end{proof}

\end{document}